\documentclass{article}

\usepackage[preprint]{neurips_2026} % arXiv preprint

\usepackage[utf8]{inputenc}
\usepackage[T1]{fontenc}
\usepackage{hyperref}
\usepackage{url}
\usepackage{booktabs}
\usepackage{amsfonts}
\usepackage{amsmath}
\usepackage{amssymb}
\usepackage{mathtools}
\usepackage{amsthm}
\usepackage{nicefrac}
\usepackage{microtype}
\usepackage{xcolor}
\usepackage{graphicx}
\usepackage{enumitem}
\usepackage{multirow}
\usepackage{algorithm}
\usepackage[capitalize,noabbrev]{cleveref}

\theoremstyle{plain}
\newtheorem{theorem}{Theorem}
\newtheorem{proposition}[theorem]{Proposition}
\newtheorem{lemma}[theorem]{Lemma}
\newtheorem{corollary}[theorem]{Corollary}
\theoremstyle{definition}
\newtheorem{definition}[theorem]{Definition}
\newtheorem{assumption}[theorem]{Assumption}

\theoremstyle{remark}
\newtheorem{remark}[theorem]{Remark}

\newcommand{\E}{\mathbb{E}}
\newcommand{\Prob}{\mathbb{P}}
\newcommand{\R}{\mathbb{R}}
\newcommand{\N}{\mathbb{N}}
\newcommand{\calS}{\mathcal{S}}
\newcommand{\calA}{\mathcal{A}}

\newcommand{\calH}{\mathcal{H}}

\newcommand{\calN}{\mathcal{N}}
\newcommand{\indic}{\mathbf{1}}
\newcommand{\Doff}{\mathcal{D}_{N}}
\newcommand{\Poff}{\Prob^{\mathrm{off}}}
\newcommand{\Eoff}{\E^{\mathrm{off}}}

\newcommand{\IDSeta}{\mathrm{IDS}_{\eta}}
\newcommand{\IDSzero}{\mathrm{IDS}_{0}}
\newcommand{\TS}{\mathrm{TS}}
\newcommand{\refTS}{\mathrm{ref\mbox{-}TS}}
\newcommand{\BR}{\mathrm{BayesRegret}}
\newcommand{\Tr}{\operatorname{Tr}}
\newcommand{\argmin}{\operatorname*{arg\,min}}
\newcommand{\Gram}{\Lambda}
\newcommand{\Hent}{H_{2}}

\title{Information-Directed Offline-to-Online Reinforcement Learning}

\author{%
  Keru Chen \\
  School of Electrical, Computer and Energy Engineering, Arizona State University \\
  \texttt{kchen234@asu.edu} 
}

\begin{document}
\maketitle

\begin{abstract}
Decision-making from offline datasets typically warm-starts a policy or score model from fixed offline data and then refines it with limited online interaction. Offline data reduces uncertainty, but it does not remove the need for exploration; it changes what remains to be explored. We formalise this residual uncertainty by the conditional mutual information $I(\chi;\tau_{1:T}\mid\mathcal{D}_N)$ between a learning target $\chi$ and the online trajectories after conditioning on the offline dataset. This view leads naturally to information-directed sampling (IDS), a family parameterised by $\eta\ge 0$ that selects actions by trading off instantaneous regret against information gain. We prove a generic offline-to-online Bayesian regret bound for IDS through a ratio certificate: any information-ratio bound satisfied by a reference Thompson-sampling policy over the same randomised policy class is inherited by IDS. In a known-dynamics Bayesian linear-reward model, the conditional mutual information has a log-determinant form, and vanilla IDS ($\eta=0$) satisfies $\widetilde O\!\left(Hd\min\left\{\sqrt T,\,T\sqrt{C^\dagger_{\beta,\mathrm{IDS}_0}(N,T)/N}\right\}\right),$ where the coverage coefficient is tied to the visitation distribution induced by vanilla IDS itself. We also identify a warm-start regime with a dominated but informative probe in which vanilla IDS selects the probe while Thompson sampling never does, giving a constant-factor Bayesian regret separation. Controlled bandit experiments and D4RL offline-to-online RL experiments validate this mechanism: IDS is most beneficial when offline data is informative but leaves biased or low-probability residual uncertainty that targeted online actions can resolve, a regime shared by offline RL, offline black-box optimization, and Bayesian optimization.
\end{abstract}

%-----------------------------------------------------------------------
\section{Introduction}
\label{sec:intro}

Modern data-driven decision-making systems rarely learn entirely online. Whether the deployed pipeline is offline reinforcement learning, offline model-based black-box optimization, Bayesian optimization, or a contextual bandit, a model or policy is first warm-started from a fixed offline dataset and then adapted with a limited amount of online interaction~\citep{levine2020offline,kumar2020conservative,kostrikov2021iql,nair2020accelerating,nakamoto2023cal,yao2025lilodriver,lee2022offline,li2023proto,yao2026langmarl,ball2023efficient,zhao2025autonomous}. This offline-to-online pipeline is appealing because offline data makes online learning safer and cheaper~\citep{chen2024towards,yao2025comal}. However, offline data does not remove the need for exploration. It only changes what remains to be explored. After conditioning on the offline dataset, the learner no longer needs to explore broadly over the original state-action (or design) space, but must instead resolve the residual uncertainty left by the warm start. We focus on the offline-to-online RL setting, but the principle is community-agnostic.

Most existing offline-to-online methods treat the two phases through different principles. Offline algorithms are conservative: they avoid actions that are poorly covered by the behaviour policy because extrapolation error is the dominant failure mode~\citep{fujimoto2019off,kumar2020conservative,jin2021pessimism,xie2021bellman,rashidinejad2021bridging,uehara2022pessimistic}. Online algorithms are exploratory: they deliberately visit uncertain parts of the state-action space to improve future decisions~\citep{abbasi2011improved,osband2013psrl,jin2020linear}. The transition between these principles is usually handled by algorithmic devices such as replay balancing, policy constraints, calibrated pessimism, or exploration bonuses~\citep{nakamoto2023cal,li2023proto,lee2022offline,song2023hybrid}. These methods can work well, but they do not isolate the basic statistical question: after observing $\Doff$, which online actions are worth taking because they cheaply resolve the residual uncertainty left by the warm start?

We study offline-to-online learning from this residual-uncertainty perspective. Let $\chi$ denote a learning target, such as an unknown value parameter, model parameter, or latent task mode. The conditional mutual information
$
    I(\chi;\tau_{1:T}\mid\Doff)
$
measures how much information the online trajectories can still extract about $\chi$ after the offline dataset has already been observed. This quantity is not a penalty for poor offline coverage; it is the information still missing after the posterior has been conditioned on $\Doff$. The central claim of this paper is that offline-to-online exploration should be directed at this residual information rather than at uncertainty under the original prior.

This view points to the information-directed sampling family~\citep{russo2016information,russo2018ids}, which we write as $\IDSeta$ for the regularised rule and $\IDSzero$ for vanilla IDS. The family evaluates actions by a regret-information tradeoff
$$
    \Psi^\eta = \Delta^2/(g+\eta),
$$
where $\Delta$ is instantaneous expected regret and $g$ is information gain. Thompson sampling also admits information-ratio analyses~\citep{thompson1933likelihood,osband2013psrl,russo2014posterior}, but it explores implicitly by probability matching: it samples a model from the posterior and acts optimally for that model. After a strong offline warm start, this distinction matters. After offline warm start, the posterior often concentrates on a dominant mode while leaving low-probability but consequential modes unresolved. Thompson sampling tests that mode only when it samples it. The IDS family can instead choose a deliberately suboptimal diagnostic action if that action resolves the remaining uncertainty at sufficiently low regret.

Our Bayesian viewpoint is closest in spirit to~\citet{hu2024bayesian}, who use probability matching to navigate the pessimism-versus-optimism dilemma in off-to-on fine-tuning. We share the residual-posterior view but study an explicit regret--information tradeoff: a diagnostic action can be informative even when it is not optimal under any sampled model, so it is invisible to probability matching but selectable by IDS.

We make three contributions. (i)~A generic off-to-on Bayesian regret bound for $\IDSeta$ via a ratio certificate: a reference Thompson policy is used only as a witness, and the conditional mutual-information chain rule converts the sum of online information gains into the single residual-information term $I(\chi;\tau_{1:T}\mid\Doff)$. (ii)~A known-dynamics linear-reward instantiation in which residual information has a closed-form log-determinant, yielding
$$
    \BR_T^{\mathrm{off}}(\pi^{\IDSzero,w})
    \le
    \widetilde O\!\left(
    Hd\min\left\{
        \sqrt{T},\,
        T\sqrt{C^\dagger_{\beta,\IDSzero}(N,T)/N}
    \right\}
    \right),
$$
where the warm-start branch tracks a coverage coefficient pinned to the visitation distribution induced by $\IDSzero$ itself, rather than a static object pinned to an optimal policy. (iii)~A dominated-probe regime in which Thompson sampling pays at least $(1-p)(1-e^{-1})(\underline\Delta_++\underline\Delta_-)$ over a horizon of order $1/p$ while $\IDSzero$ pays at most $(1-p)c_0+pc_1$, giving a constant-factor separation that is structural rather than asymptotic.

Experiments test the same mechanism. A hidden-mode bandit exhibits the predicted phase transition as the offline posterior concentrates; a biased linear contextual bandit shows the $\IDSeta$ ratio improving over standard baselines under informative-but-biased warm start; and \textbf{ROID} (\emph{Residual-Optimized Information-Directed Sampling}), a deep $\IDSeta$ selector with a TD3+BC backbone~\citep{fujimoto2021minimalist} and bootstrapped Q-ensemble, achieves state-of-the-art on D4RL~\citep{fu2020d4rl} continuous-control off-to-on tasks.

%-----------------------------------------------------------------------
\section{Related Work}
\label{sec:related}

\paragraph{Offline RL and pessimism.} Conservative or behaviour-constrained value estimation underlies most offline-only methods, both algorithmically (BCQ~\citep{fujimoto2019off}, CQL~\citep{kumar2020conservative}, IQL~\citep{kostrikov2021iql}, AWAC~\citep{nair2020accelerating}, TD3+BC~\citep{fujimoto2021minimalist}) and analytically (pessimism in linear MDPs and Bellman-consistent function classes~\citep{jin2021pessimism,xie2021bellman,uehara2022pessimistic,rashidinejad2021bridging}). These methods penalise extrapolation but do not directly extend to the online phase.

\paragraph{Off-to-on RL.} Recent work on the two-phase pipeline includes balanced replay and pessimistic ensembles~\citep{lee2022offline}, calibrated offline pretraining~\citep{nakamoto2023cal}, iterative policy regularisation~\citep{li2023proto}, hybrid actor-critic with offline data~\citep{ball2023efficient,song2023hybrid}, adaptive online policies~\citep{zheng2022online,kim2023sample}, and analyses of how offline data should accelerate online learning~\citep{wagenmaker2023leveraging,xie2022policy}. Most closely related is the Bayesian design-principles framework of~\citet{hu2024bayesian}, which argues that probability matching is a principled way to avoid the pessimism-versus-optimism dilemma in offline-to-online fine-tuning and introduces BOORL. Our work shares the posterior-conditioning viewpoint but replaces probability matching with an explicit information-ratio criterion. This lets us analyse residual information through $I(\chi;\tau_{1:T}\mid\Doff)$, track a coverage coefficient tied to the visitation induced by IDS, and identify a dominated-probe regime in which IDS and Thompson sampling separate.

\paragraph{Information-directed sampling.} The information-ratio framework was introduced for posterior sampling by~\citet{russo2014posterior,russo2016information}; IDS was introduced for bandits by~\citet{russo2018ids}, extended to heteroscedastic noise by~\citet{kirschner2018ids}, and lifted to RL by~\citet{hao2022ids}; broader information-theoretic foundations appear in~\citet{lu2021bitbybit}; a closely related decision-estimation viewpoint appears in~\citet{foster2021statistical}. Compared to~\citet{hao2022ids}, who establish IDS regret bounds in the purely online linear-mixture and tabular regimes, our analysis (a)~operates in the off-to-on regime where the prior is itself the offline-induced posterior, (b)~tracks a coverage object pinned to IDS-induced visitation, and (c)~identifies a structural regime in which IDS strictly beats TS, which is not visible from any TS-symmetric analysis.

\paragraph{Cross-community connection.} Decision-making from offline datasets is also studied in offline model-based black-box optimization, Bayesian optimization, and contextual bandits, where a surrogate (or reward predictor) fit on logged data is later refined by online queries. The same structural question recurs: after conditioning on the offline dataset, where does the residual uncertainty live, and which online queries cheapest resolve it? The conditional-mutual-information view and the regret--information ratio are community-agnostic, and our bandit and D4RL experiments speak to both the bandits/BO and offline-RL audiences this workshop convenes.

%-----------------------------------------------------------------------
\section{Setup}
\label{sec:setup}

A finite-horizon episodic MDP $M=(\calS,\calA,H,P,r)$ has horizon $H$ and rewards in $[0,1]$. A policy is $\pi=(\pi_1,\dots,\pi_H)$ with $\pi_h:\calS\to\Delta(\calA)$. For parameter $w$ and policy $\pi$, $V_h^\pi(s;w)$ and $Q_h^\pi(s,a;w)$ are the standard value functions and $\pi^*_w$ denotes an optimiser. For the regret analysis, we work in a known-dynamics Bayesian linear-reward model. This setting retains the offline-to-online exploration problem while isolating uncertainty in the stage-local reward/observation channel: a known feature map $\phi:\calS\times\calA\to\R^d$ satisfies $\|\phi\|_2\le 1$, and for each stage $h\in[H]$ an unknown $w_h\in\R^d$ gives $Q_h^\pi(s,a)=\langle\phi(s,a),w_h^\pi\rangle$. The prior is $w_h\sim\calN(0,\lambda^{-1} I)$ with fixed precision $\lambda\ge 1$, independent across stages, and observations carry unit-variance Gaussian noise; we write $w=(w_1,\dots,w_H)$.

The offline dataset is $\Doff=\bigcup_h\mathcal{D}_{N,h}$ with $n_h\ge 1$ samples at stage $h$ and $N=\sum_h n_h$. The stage-$h$ Gram matrix is $\Gram_{h,N}:=\lambda I+\sum_{(s,a)\in\mathcal{D}_{N,h}}\phi(s,a)\phi(s,a)^\top$. The online phase consists of $T$ episodes indexed $t=1,\dots,T$; with $\phi_{t,h}:=\phi(s_{t,h},a_{t,h})$ the rank-$1$ recursion $\Gram_{h,N+t}=\Gram_{h,N+t-1}+\phi_{t,h}\phi_{t,h}^\top$ defines $\Gram_{h,N+T}$. Let $\tau_t$ be the episode-$t$ trajectory and $\calH_t:=\sigma(\Doff,\tau_{1:t-1})$, with posterior $\beta_t:=\Prob(w\in\cdot\mid\calH_t)$. For a candidate policy $\pi$ and learning target $\chi$, the per-episode regret and information gain are
\begin{equation*}
\Delta_t(\pi):=\E_{t,\pi}[V_1^{\pi^*_w}(s_{t,1};w)-V_1^\pi(s_{t,1};w)],\qquad
I_t^\chi(\pi):=I(\chi;\tau_t\mid\calH_t,\pi_t=\pi).
\end{equation*}
Throughout the information-ratio analysis, $\Pi$ denotes the randomised closure of the candidate policy class; in particular, the episode-level posterior-sampling mixture used as the reference comparator is feasible.
For $\eta\ge 0$, the regularised IDS policy is
\begin{equation*}
\pi_t^{\IDSeta,\chi}\in\argmin_{\pi\in\Pi}\Psi_t^\eta(\pi;\chi),\qquad
\Psi_t^\eta(\pi;\chi):=\frac{\Delta_t(\pi)^2}{I_t^\chi(\pi)+\eta}.
\end{equation*}
We write $\IDSeta$ for the rule with score $\Delta^2/(g+\eta)$ and call $\IDSzero$ \emph{vanilla} IDS; ``IDS'' without a subscript refers to this family. Vanilla $\IDSzero$ uses the convention $\Psi=+\infty$ when $I_t^\chi=0<\Delta_t$ and $\Psi=0$ when $\Delta_t=I_t^\chi=0$. Bayesian regret over $T$ online episodes conditional on $\Doff$ is $\BR_T^{\mathrm{off}}(\pi):=\Eoff[\sum_{t=1}^T\Delta_t(\pi_t)]$. The notation $\widetilde O(\cdot)$ hides factors that are polylogarithmic in $T,H,d,N$.

\paragraph{Scope of the regret-bound instantiation.} For the regret-bound results in Section~\ref{sec:bound}, we work in the stage-local linear-Gaussian observation model: the transition kernel $P$ is known and does not depend on $w$, and the mean rewards used to define regret remain bounded in $[0,1]$. The component of the stage-$h$ observation $o_{t,h}$ that carries information about $w_h$ is a scalar linear-Gaussian regression target
\begin{equation*}
y_{t,h}=\phi(s_{t,h},a_{t,h})^\top w_h+\epsilon_{t,h},\qquad \epsilon_{t,h}\sim\calN(0,1)\ \text{independent},
\end{equation*}
which we treat as the Bayesian observation channel for posterior updates on $w_h$. The next state $s_{t,h+1}$ is conditionally independent of $w$ given $(\calH_{t,h},a_{t,h},y_{t,h})$. The structural separation in Section~\ref{sec:separation} is a separate finite construction and does not rely on this Gaussian observation model.

\paragraph{Linear-reward value-difference inequality.} Under the linear-reward and known-dynamics assumptions just stated, a direct telescoping argument yields the following value-difference inequality, which is the only structural property of the linear-$Q$ parametrisation that the regret-ratio proof of Section~\ref{sec:bound} uses: for any two parameter vectors $u,v\in\R^{Hd}$ and any policy $\pi$,
\begin{equation}\label{eq:linQ-vd}
\bigl|V_1^\pi(s_1;u)-V_1^\pi(s_1;v)\bigr|
\le
\sum_{h=1}^H \E_{\pi,P}\!\bigl[\bigl|\phi(s_h,a_h)^\top(u_h-v_h)\bigr|\bigr],
\end{equation}
where the expectation is taken over the known transition kernel $P$ and the policy randomness. We refer to~\eqref{eq:linQ-vd} as the \emph{linear-reward value-difference inequality}.

%-----------------------------------------------------------------------
\section{An Off-to-On Bayesian Regret Bound for IDS}
\label{sec:bound}

The argument has three pieces. An information-ratio bound on a reference policy passes to IDS automatically because IDS optimises the same ratio. Cauchy--Schwarz turns a per-episode ratio bound into a sum of square-rooted information gains. The conditional-MI chain rule turns that sum into a single number, $I(\chi;\tau_{1:T}\mid\Doff)$, into which the offline data enters only through the conditioning. The first two pieces are standard for posterior-sampling analysis~\citep{russo2014posterior,russo2016information}; the change of executed policy and the routing through conditional MI are what make the argument work in the off-to-on regime. Full proofs are in Appendix~\ref{app:proofs-bound}.

For each $t$ draw $\widetilde w_t\sim\beta_t$ independently of $w$ and set $\pi_t^\refTS:=\pi^*(\widetilde w_t)$. In the known-dynamics linear-reward instantiation of Section~\ref{sec:setup}, the standard probability-matching identity applies to state-action functionals under the reference posterior sample (Lemma~\ref{lem:prob-match} in Appendix~\ref{app:proofs-bound}). We require an information-ratio condition on this reference policy.

\begin{assumption}[Reference information-ratio condition]\label{assum:refts-ratio}
There exist a measurable event $G$ in the augmented probability space that includes the auxiliary reference samples $(\widetilde w_t)_{t\le T}$, and a constant $C_\chi\ge 0$, such that, on $G$ and for every $t\le T$,
\begin{equation*}
\Delta_t(\pi_t^\refTS)^2\le C_\chi\,I_t^\chi(\pi_t^\refTS).
\end{equation*}
\end{assumption}

In the linear-$Q$ model we verify Assumption~\ref{assum:refts-ratio} with $C_w=O(Hd)$ via stage-local decomposition (Section~\ref{subsec:linQ-instance}; proof in Appendix~\ref{app:proofs-bound}).

\begin{lemma}[Ratio certificate]\label{lem:cert}
On $G$ and for every $\eta\ge 0$, $\Psi_t^\eta(\pi_t^{\IDSeta,\chi};\chi)\le\Psi_t^\eta(\pi_t^\refTS;\chi)\le C_\chi$.
\end{lemma}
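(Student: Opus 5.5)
The proof has two inequalities, and I will handle them separately.

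\emph{First inequality.} This is just optimality of the minimiser. The paper's standing convention says $\Pi$ is the randomised closure of the candidate class. That closure contains the episode-level posterior-sampling mixture, so $\pi_t^\refTS$ is a feasible element of $\Pi$. Here $\pi_t^\refTS$ means the randomised policy obtained by marginalising over $\widetilde w_t\sim\beta_t$, and this is the object whose $\Delta_t$ and $I_t^\chi$ appear in Assumption~\ref{assum:refts-ratio}. Since $\pi_t^{\IDSeta,\chi}$ minimises $\Psi_t^\eta(\cdot;\chi)$ over $\Pi$, we get $\Psi_t^\eta(\pi_t^{\IDSeta,\chi};\chi)\le\Psi_t^\eta(\pi;\chi)$ for every $\pi\in\Pi$, and in particular for $\pi=\pi_t^\refTS$.

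Two small points need care here.
\begin{itemize}
\item If the argmin is not attained, I interpret the $\in\argmin$ in the definition as an existence assumption. Otherwise the argument still goes through for any near-minimiser, up to an arbitrarily small slack.
\item The extended-value convention must be respected when $\eta=0$. If $I_t^\chi(\pi)=0<\Delta_t(\pi)$ we set $\Psi=+\infty$, and if $\Delta_t(\pi)=I_t^\chi(\pi)=0$ we set $\Psi=0$. With these values the inequality is simply an order relation on $[0,+\infty]$.
\end{itemize}

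\emph{Second inequality.} I work on $G$ and use Assumption~\ref{assum:refts-ratio}, which gives $\Delta_t(\pi_t^\refTS)^2\le C_\chi I_t^\chi(\pi_t^\refTS)$. For $\eta>0$ the bound is immediate:
\begin{equation*}
\frac{\Delta_t(\pi_t^\refTS)^2}{I_t^\chi(\pi_t^\refTS)+\eta}\le\frac{C_\chi I_t^\chi(\pi_t^\refTS)}{I_t^\chi(\pi_t^\refTS)+\eta}\le C_\chi,
\end{equation*}
because $I_t^\chi\ge0$ and $C_\chi\ge0$.

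For $\eta=0$ I split into cases on whether $I_t^\chi(\pi_t^\refTS)$ is zero.
\begin{itemize}
\item If $I_t^\chi(\pi_t^\refTS)>0$, dividing the assumed inequality by $I_t^\chi(\pi_t^\refTS)$ gives the ratio bound directly.
\item If $I_t^\chi(\pi_t^\refTS)=0$, the assumption forces $\Delta_t(\pi_t^\refTS)^2\le0$, so $\Delta_t=0$. The convention then sets $\Psi_t^0(\pi_t^\refTS;\chi)=0\le C_\chi$. This is exactly where the convention $0/0=0$ matters, and it rules out the $+\infty$ case.
\end{itemize}

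\emph{Main obstacle.} The step needing the most care is not analytic; it is the measurability and feasibility bookkeeping. Assumption~\ref{assum:refts-ratio} holds on an event $G$ in the augmented space that includes the auxiliary samples $(\widetilde w_t)$, whereas $\Psi_t^\eta$ of a policy is an $\calH_t$-conditional quantity. I must therefore make explicit that the reference policy entering $\Psi_t^\eta$ is the $\calH_t$-measurable randomised mixture. That mixture's regret and information gain are deterministic given $\calH_t$, so "on $G$" amounts to a statement about $\calH_t$-measurable quantities, and the comparison with the IDS minimiser over $\Pi$ happens pointwise for each history.

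Once this identification is stated, both inequalities follow pointwise on $G$, for every $t\le T$ and every $\eta\ge0$.
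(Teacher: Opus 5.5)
Your proposal is correct and matches the paper's proof: the first inequality is optimality of $\pi_t^{\IDSeta,\chi}$ over $\Pi$, which contains the reference posterior-sampling mixture, and the second is Assumption~\ref{assum:refts-ratio} divided by $I_t^\chi+\eta$. You also treat the case $\eta=0$, $I_t^\chi(\pi_t^\refTS)=0$ separately: there the assumption forces $\Delta_t=0$, and the $0/0=0$ convention gives $\Psi=0$. The paper leaves this case implicit, and your handling of it is correct.
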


The first inequality holds because IDS minimises $\Psi_t^\eta$; the second is Assumption~\ref{assum:refts-ratio} after dividing by $I_t^\chi+\eta$. The lemma is one line, but the role it plays is specific: once the executed policy is no longer probability-matching, the reference policy serves only as a witness, and the executed policy inherits its ratio bound over the same policy class. This is the only point in the argument where the choice of executed policy matters.

\begin{proposition}[Master inequality]\label{prop:generic-decomp}
Under Assumption~\ref{assum:refts-ratio}, for every $\eta\ge 0$,
\begin{equation*}
\BR_T^{\mathrm{off}}(\pi^{\IDSeta,\chi})\le\sqrt{T C_\chi\bigl(I(\chi;\tau_{1:T}\mid\Doff)+T\eta\bigr)}+HT\Poff(G^c).
\end{equation*}
\end{proposition}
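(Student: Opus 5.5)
The plan is to split the regret according to the good event $G$ of Assumption~\ref{assum:refts-ratio}, apply Lemma~\ref{lem:cert} on $G$, and then use Cauchy--Schwarz together with the chain rule for conditional mutual information. Write $\pi_t:=\pi_t^{\IDSeta,\chi}$. Since rewards lie in $[0,1]$, we have $0\le\Delta_t(\pi)\le H$ for every $\pi$. Hence
\begin{equation*}
\BR_T^{\mathrm{off}}\le\Eoff\Bigl[\sum_{t=1}^T\Delta_t(\pi_t)\indic_G\Bigr]+HT\,\Poff(G^c).
\end{equation*}
This produces the second term of the bound directly. On $G$, Lemma~\ref{lem:cert} gives $\Delta_t(\pi_t)^2\le C_\chi\bigl(I_t^\chi(\pi_t)+\eta\bigr)$ for every $t$. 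For $\eta=0$ we also need the convention that $\Psi=+\infty$ when $I=0<\Delta$; because $\Psi_t^\eta(\pi_t)\le C_\chi<\infty$, that case cannot arise. It follows that $\Delta_t(\pi_t)\indic_G\le\sqrt{C_\chi(I_t^\chi(\pi_t)+\eta)}$ pointwise, and dropping the indicator on the right only increases the bound.

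Next I would apply Cauchy--Schwarz twice, once over $t$ and once over the expectation (Jensen for $\sqrt{\cdot}$):
\begin{equation*}
\Eoff\Bigl[\sum_t\sqrt{C_\chi(I_t^\chi(\pi_t)+\eta)}\Bigr]\le\sqrt{T C_\chi\Bigl(\Eoff\Bigl[\sum_t I_t^\chi(\pi_t)\Bigr]+T\eta\Bigr)}.
\end{equation*}
The remaining task is to show $\Eoff[\sum_t I_t^\chi(\pi_t)]=I(\chi;\tau_{1:T}\mid\Doff)$. Here $I_t^\chi(\pi_t)$ is the $\calH_t$-measurable quantity $I(\chi;\tau_t\mid\calH_t,\pi_t)$. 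The executed policy $\pi_t$ is a function of $\calH_t$ together with internal randomisation that is independent of $\chi$. Averaging over $\calH_t$ therefore gives $\Eoff[I_t^\chi(\pi_t)]=I(\chi;\tau_t\mid\Doff,\tau_{1:t-1})$. I would absorb the randomisation into $\tau_t$, or argue that conditioning on it does not change the mutual information, since it is independent of $\chi$ given $\calH_t$. Summing over $t$ and applying the chain rule gives $\sum_t I(\chi;\tau_t\mid\Doff,\tau_{1:t-1})=I(\chi;\tau_{1:T}\mid\Doff)$.

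I expect the main obstacle to be this last identification, which must be made rigorous for the randomised IDS policy. There are two points to check. First, the conditional MI evaluated at the realised policy, $I(\chi;\tau_t\mid\calH_t,\pi_t=\pi)$ taken at $\pi=\pi_t$, must average to the unconditional-in-$\pi$ term. Second, the auxiliary reference samples $\widetilde w_t$ that define $G$ must not enter the information quantities. I would handle the second point by noting that $\widetilde w_t$ is drawn independently of $(w,\tau)$ given $\calH_t$, so it affects only the event $G$, and $G$ has already been separated out through the indicator before the MI step.
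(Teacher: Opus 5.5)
Your proposal is correct and follows essentially the same route as the paper. You charge $HT\Poff(G^c)$ for $G^c$, and on $G$ you use Lemma~\ref{lem:cert} to get $\Delta_t^2\le C_\chi(I_t^\chi+\eta)$, then apply Cauchy--Schwarz over $t$ and Jensen, and finally turn $\sum_t\Eoff[I_t^\chi]$ into $I(\chi;\tau_{1:T}\mid\Doff)$ via $\calH_t$-measurability of the IDS policy and the conditional-MI chain rule; your extra care about the $\eta=0$ boundary convention and the policy's internal randomisation makes explicit details the paper leaves implicit.
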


The proof square-roots Lemma~\ref{lem:cert}, applies Cauchy--Schwarz to the sum over episodes, takes $\Eoff[\cdot]$ with Jensen on the right, and uses that $\pi_t^{\IDSeta,\chi}$ is $\calH_t$-measurable to invoke the conditional-MI chain rule~\citep[Thm.~2.5.2]{cover2006elements}: $\sum_t\Eoff[I_t^\chi]=I(\chi;\tau_{1:T}\mid\Doff)$. The full argument is in Appendix~\ref{app:proofs-bound}. The proposition is the conceptual core of the paper: the offline dataset enters only through the conditioning of one mutual information, and the two phases are unified at the level of the proof.

\subsection{Linear-\texorpdfstring{$Q$}{Q} Instantiation}\label{subsec:linQ-instance}

Two ingredients turn Proposition~\ref{prop:generic-decomp} into a closed-form bound: a closed form for $I(w;\tau_{1:T}\mid\Doff)$, and a bound on $C_w$. Both follow from the standard stage-local decomposition $I(w;\tau_t\mid\calH_t,\pi)=\sum_h I(w_h;o_{t,h}\mid\calH_{t,h},a_{t,h},\pi)$ (Lemma~\ref{lem:mi-decomp}).

\begin{lemma}[Closed-form per-stage information]\label{lem:closed-form-MI}
Under the known-dynamics linear-reward instantiation, $I(w_h;o_{t,h}\mid\calH_{t,h},a_{t,h})=\tfrac12\log(1+\phi_{t,h}^\top\Gram_{h,N+t-1}^{-1}\phi_{t,h})$.
\end{lemma}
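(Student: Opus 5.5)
The proof is the standard Gaussian-channel computation: we identify the conditional law of $w_h$ and of the informative component of $o_{t,h}$, and then evaluate the mutual information as a difference of Gaussian entropies. First we reduce $o_{t,h}$ to $y_{t,h}$. By the scope assumptions, the transition kernel is known and independent of $w$, and the next state $s_{t,h+1}$ is conditionally independent of $w$ given $(\calH_{t,h},a_{t,h},y_{t,h})$. The chain rule then gives
\begin{equation*}
I(w_h;o_{t,h}\mid\calH_{t,h},a_{t,h})=I(w_h;y_{t,h}\mid\calH_{t,h},a_{t,h}),
\end{equation*}
since the extra term vanishes. The same independence structure, combined with the independence of the priors across stages, shows that the $w_h$-posterior given $\calH_{t,h}$ depends only on the stage-$h$ regression pairs. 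These pairs are the offline data $\mathcal{D}_{N,h}$ and the online stage-$h$ observations from episodes $1,\dots,t-1$. Here $\calH_{t,h}$ is the history up to stage $h$ of episode $t$; observations at stages other than $h$ carry no information about $w_h$.

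Second, we compute this posterior by Bayesian linear regression. The prior is $\calN(0,\lambda^{-1}I)$, and we have $N$-offline plus $(t-1)$ online unit-noise linear observations, whose features are chosen adaptively but measurably with respect to the past. Conditional conjugacy therefore applies step by step, and the posterior is $\calN(\mu,\Gram_{h,N+t-1}^{-1})$. Its precision is exactly the matrix produced by the rank-one recursion from $\Gram_{h,N}$. The point to check is that adaptive selection of the features $\phi$ does not break conjugacy. This holds because each action is a function of the past history and independent auxiliary randomness, so the likelihood factorises and the action-selection terms are constant in $w_h$.

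Third, we condition on $a_{t,h}$ and $s_{t,h}$, so that $\phi_{t,h}$ is fixed, and write $y=\phi_{t,h}^\top w_h+\epsilon$. Then
\begin{equation*}
I(w_h;y\mid\cdot)=h(y\mid\cdot)-h(y\mid w_h,\cdot).
\end{equation*}
Given the history, $y$ is Gaussian with variance $\phi_{t,h}^\top\Gram_{h,N+t-1}^{-1}\phi_{t,h}+1$, while given $w_h$ its variance is $1$. Hence
\begin{equation*}
I(w_h;y\mid\cdot)=\tfrac12\log\bigl(1+\phi_{t,h}^\top\Gram_{h,N+t-1}^{-1}\phi_{t,h}\bigr).
\end{equation*}
This value is deterministic given the conditioning, so the pointwise identity and its conditional-expectation version coincide.

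The main obstacle is the first step. We must justify rigorously that conditioning on intra-episode history from earlier stages of episode $t$, and on the non-$y$ parts of the observations, neither alters the Gaussian posterior of $w_h$ nor leaks information about it. Our plan is to use the stated conditional-independence assumption together with the product-form prior, factorising the joint density of $(w,\tau_{1:t-1},\tau_{t,1:h})$ into $w_h$-dependent and $w_h$-free parts.
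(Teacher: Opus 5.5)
Your proposal is correct and follows essentially the paper's route: reduce $o_{t,h}$ to the regression target $y_{t,h}$, identify the Gaussian posterior $\calN(\mu_{h,N+t-1},\Gram_{h,N+t-1}^{-1})$, and evaluate the Gaussian-channel mutual information. The differences are minor. You compute $h(y\mid\cdot)-h(y\mid w_h,\cdot)$ directly, which bypasses the paper's separate sufficient-statistic step $I(w_h;y)=I(\phi_{t,h}^\top w_h;y)$. You also spell out the adaptive-design conjugacy argument, which the paper takes for granted.
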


\begin{lemma}[Reference ratio constant]\label{lem:Cw}
Assumption~\ref{assum:refts-ratio} holds (with the trivial event $G=\Omega$, i.e.\ $\Poff(G^c)=0$) with $C_w=H\Gamma^2=O(Hd)$, where $\Gamma=\sqrt{4d/\log 2}$.
\end{lemma}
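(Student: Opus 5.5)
The plan is to follow the standard Russo--Van Roy linear-bandit information-ratio argument, carried out stage by stage and then summed over stages with Cauchy--Schwarz. Fix $t$ and write $\widetilde\pi:=\pi_t^{\refTS}=\pi^*(\widetilde w_t)$.

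\textbf{Step 1 (decompose the regret).} Using the probability-matching identity (Lemma~\ref{lem:prob-match}), $\widetilde w_t$ and $w$ have the same law given $\calH_t$, and $\widetilde w_t$ is independent of $w$. Hence
\begin{equation*}
\Delta_t(\widetilde\pi)=\E_t\bigl[V_1^{\pi^*_{\widetilde w_t}}(s_1;\widetilde w_t)-V_1^{\pi^*_{\widetilde w_t}}(s_1;w)\bigr].
\end{equation*}
The linear-reward value-difference inequality~\eqref{eq:linQ-vd} then bounds this by $\sum_h \E_t\bigl[\phi(s_h,a_h)^\top(\widetilde w_{t,h}-\mu_{t,h})\bigr]$-type terms, where $(s_h,a_h)$ is drawn under $\pi^*_{\widetilde w_t}$ and the known $P$. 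To keep signed terms rather than absolute values, the first thing I will try is to telescope $V^{\pi^*_{\widetilde w}}(\cdot;\widetilde w)-V^{\pi^*_{\widetilde w}}(\cdot;w)$ exactly. This gives $\Delta_t(\widetilde\pi)=\sum_h \E[\phi_h^\top(\widetilde w_h - w_h)]$, with the occupancy determined by $\widetilde w$ alone (dynamics are known). Each stage-$h$ term is then a linear-bandit regret term: the "action" is the feature $\phi_h$, whose law depends only on $\widetilde w$.

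\textbf{Step 2 (per-stage ratio).} For each $h$, define the $d\times d$ matrix $M_h$ with entries indexed by the posterior-sampled "arms", exactly as in Russo--Van Roy. The stage-$h$ regret term is $\Tr(M_h)$, and the stage-$h$ information gain $I(w_h;o_{t,h}\mid\calH_{t,h},a_{t,h})$ dominates a constant multiple of $\|M_h\|_F^2$. Lemma~\ref{lem:closed-form-MI} gives the Gaussian log-form, and the inequality $\tfrac12\log(1+x)\ge \tfrac{\log 2}{2}\min(x,1)$ together with boundedness converts this to a variance lower bound. Since $\mathrm{rank}(M_h)\le d$, we get $\Tr(M_h)^2\le d\|M_h\|_F^2$, which yields $(\text{regret}_h)^2\le \Gamma^2 I_h$ with $\Gamma^2=4d/\log 2$. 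Some care is needed because rewards lie in $[0,1]$ while Gaussian observations are unbounded; I would use the Gaussian-specific bound $I\ge$ half the log variance ratio, directly via the closed form, instead of a Pinsker/sub-Gaussian route.

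\textbf{Step 3 (sum stages).} By Cauchy--Schwarz,
\begin{equation*}
\Delta_t^2=\Bigl(\sum_h \text{regret}_h\Bigr)^2\le H\sum_h(\text{regret}_h)^2\le H\Gamma^2\sum_h I_h=H\Gamma^2\, I_t^w(\widetilde\pi),
\end{equation*}
where the last equality is the stage-local decomposition Lemma~\ref{lem:mi-decomp}. This bound holds for every $t$ and every realization, so one may take $G=\Omega$.

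\textbf{Main obstacle.} I expect the main obstacle to be Step 2. The issue is that $\phi_h$ under $\widetilde\pi$ depends on the entire $\widetilde w$ through the state distribution, not only on $\widetilde w_h$. The matrix argument must therefore condition on the sampled feature $\phi_h$ as the "arm", use the fact that $\phi_h$ and $w$ are conditionally independent, and verify that the stage posterior at $\calH_{t,h}$ coincides with the episode-start posterior on $w_h$. That coincidence holds because observations at earlier stages carry no information about $w_h$ under the stage-independent prior.
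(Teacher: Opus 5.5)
Your proposal is correct, and it reaches the lemma by a different per-stage argument from the paper's; Step~3 (Cauchy--Schwarz over stages, Lemma~\ref{lem:mi-decomp}, $G=\Omega$) is identical.

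\textbf{The paper's route.} The paper never works with signed terms. It applies the absolute-value inequality~\eqref{eq:linQ-vd}, then uses Cauchy--Schwarz twice: pointwise in the $\Gram_{h,N+t-1}$-norm, and again in expectation. The factor $d$ comes from the exact second moment $\E_t\|\widetilde w_{t,h}-w_h\|^2_{\Gram_{h,N+t-1}}=2d$ of two independent posterior draws.

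\textbf{Your route.} You use the exact signed telescoping, which is available because rewards are linear and $P$ is known. You write the stage regret as a trace and put the factor $d$ into $\Tr(M_h)^2\le d\|M_h\|_F^2$. The information side (Gaussian closed form, $x\le\tfrac{2}{\log 2}J^w_{t,h}$) is the same as the paper's Step~4.

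\textbf{Making Step~2 precise.} Your Step~2 conflates a ``$d\times d$ matrix'' with an arm-indexed one. Because the posterior is Gaussian, the arms form a continuum. The cleanest precise version is a $d\times d$ matrix. Write $\Lambda:=\Gram_{h,N+t-1}$, $\mu:=\mu_{h,N+t-1}$, and let $\bar\phi_h(u)$ be the mean stage-$h$ feature under $\pi^*_u$. Then take $M_h:=\E_t[\Lambda^{1/2}(\widetilde w_{t,h}-\mu)\,\bar\phi_h(\widetilde w_t)^\top\Lambda^{-1/2}]$.
\begin{itemize}
\item Its trace is the stage regret, because $\phi_{t,h}$ is independent of $w$ given $\calH_t$.
\item Conditionally on $\calH_t$, $\Lambda^{1/2}(\widetilde w_{t,h}-\mu)$ is standard Gaussian. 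Bessel's inequality and then Jensen give $\|M_h\|_F^2\le\E_{t,\pi_t^{\refTS}}[\phi_{t,h}^\top\Lambda^{-1}\phi_{t,h}]$.
\end{itemize}
The arm-indexed version also works, as a rank-$\le d$ kernel on $L^2(\beta_t)$. The two checks you list under ``main obstacle'' are exactly the ones needed. The paper uses them only implicitly.

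\textbf{What each route buys.} The paper's argument needs only the inequality~\eqref{eq:linQ-vd} and no rank or operator machinery. Yours needs the exact signed decomposition. In exchange, it centres at $\mu$ instead of differencing two posterior draws, which gives the per-stage constant $2d/\log 2$, half of $\Gamma^2$. Plain Cauchy--Schwarz on the centred signed term gives the same constant, so the gain comes from the signed identity, not from the rank bound. Your trace/Frobenius structure is also the template that extends, via Pinsker, to non-Gaussian settings where Lemma~\ref{lem:closed-form-MI} is unavailable.
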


The proof of Lemma~\ref{lem:Cw} combines the probability-matching identity, the linear-reward value-difference inequality~\eqref{eq:linQ-vd}, two applications of Cauchy--Schwarz, and the Bayesian second-moment identity $\E[\|\widetilde w_{t,h}-w_h\|_{\Gram_{h,N+t-1}}^2\mid\calH_t]=2d$, which holds because $w_h$ and $\widetilde w_{t,h}$ are independent posterior draws given $\calH_t$. Combining Lemmas~\ref{lem:closed-form-MI}--\ref{lem:Cw} with the rank-$1$ log-determinant identity~\citep[Lem.~11]{abbasi2011improved} yields
\begin{equation}\label{eq:cum-info-logdet}
I(w;\tau_{1:T}\mid\Doff)=\tfrac{1}{2}\sum_{h=1}^H\Eoff\bigl[\log\det(\Gram_{h,N+T})-\log\det(\Gram_{h,N})\bigr].
\end{equation}

\subsection{Two Log-Determinant Bounds and an Interpolation Theorem}

Two complementary bounds control the log-determinant. The elliptical-potential bound~\citep[Lem.~19.4]{lattimore2020bandit} gives $\log(\det\Gram_{h,N+T}/\det\Gram_{h,N})\le d\log(1+T/(d\lambda_{\min}(\Gram_{h,N})))$, which yields the standard $\sqrt{T}$ regret rate. The trace bound, obtained via $\log\det(I+M)\le\Tr(M)$~\citep{horn2012matrix}, gives $\log(\det\Gram_{h,N+T}/\det\Gram_{h,N})\le\sum_{t=1}^T\phi_{t,h}^\top\Gram_{h,N}^{-1}\phi_{t,h}$, which we control by a coverage coefficient pinned to IDS's own visitation.

\begin{definition}[$\IDSeta$-induced coverage]\label{def:ids-coverage}
Assume $n_h\ge 1$ for every $h\in[H]$. With
\begin{equation*}
\overline\Sigma_{h,T}^{\IDSeta}(\Doff):=\frac{1}{T}\sum_{t=1}^T\E\bigl[\phi_{t,h}\phi_{t,h}^\top\bigm|\Doff\bigr]
\end{equation*}
the time-averaged online visitation covariance under $\IDSeta$, define
\begin{equation*}
C_{\beta,h}^{\IDSeta}(N,T):=\lambda_{\max}\!\left((\Gram_{h,N}/n_h)^{-1/2}\;\overline\Sigma_{h,T}^{\IDSeta}\;(\Gram_{h,N}/n_h)^{-1/2}\right),\ \ 
C^\dagger_{\beta,\IDSeta}(N,T):=\frac{N}{H}\sum_{h=1}^H \frac{C_{\beta,h}^{\IDSeta}(N,T)}{n_h}.
\end{equation*}
We write $C^\dagger_{\beta,\IDSzero}(N,T)$ for the same quantity under vanilla $\IDSzero$ ($\eta=0$).
\end{definition}

This coverage object differs from the standard off-to-on coverage coefficient~\citep{rashidinejad2021bridging,uehara2022pessimistic} in two ways: it is pinned to the time-averaged visitation under the algorithm rather than under $\pi^*$, and it depends on $T$ and on the algorithm itself. Both differences are necessary in the off-to-on regime, because the algorithm's online visitation is what actually multiplies the inverse offline Gram matrix in the trace bound.

\begin{theorem}[Off-to-on Bayesian regret bound]\label{thm:baseline}
Under Assumption~\ref{assum:refts-ratio} (verified at $C_w=O(Hd)$), vanilla $\IDSzero$ satisfies
\begin{equation*}
\BR_T^{\mathrm{off}}(\pi^{\IDSzero,w})\le\widetilde O\!\Bigl(Hd\,\min\!\bigl\{\sqrt{T L_N(T)},\;T\sqrt{C^\dagger_{\beta,\IDSzero}(N,T)/N}\bigr\}\Bigr),
\end{equation*}
with $L_N(T):=\frac{1}{H}\sum_h\log(1+T/(d\lambda_{\min}(\Gram_{h,N})))$.
\end{theorem}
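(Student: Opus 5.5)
The plan is to specialise the master inequality (Proposition~\ref{prop:generic-decomp}) to $\eta=0$ and $\chi=w$. By Lemma~\ref{lem:Cw} we may take $G=\Omega$, so the $HT\Poff(G^c)$ term vanishes and
\begin{equation*}
\BR_T^{\mathrm{off}}(\pi^{\IDSzero,w})\le\sqrt{T\,C_w\,I(w;\tau_{1:T}\mid\Doff)},\qquad C_w=H\Gamma^2=O(Hd).
\end{equation*}
By the log-determinant identity~\eqref{eq:cum-info-logdet}, $I(w;\tau_{1:T}\mid\Doff)=\tfrac12\sum_h\Eoff[\log\det\Gram_{h,N+T}-\log\det\Gram_{h,N}]$. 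I will bound each stage's log-determinant ratio in two ways and take the minimum. Since the regret bound is monotone in the information term, the minimum of the two resulting bounds is a valid bound, and $\sqrt{\min\{a,b\}}=\min\{\sqrt a,\sqrt b\}$.

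\textbf{Elliptical branch.} Apply the elliptical-potential bound stage-wise: $\log(\det\Gram_{h,N+T}/\det\Gram_{h,N})\le d\log(1+T/(d\lambda_{\min}(\Gram_{h,N})))$, using $\|\phi\|\le1$. This quantity is deterministic given $\Doff$, so summing over $h$ gives $I\le\tfrac12 dH L_N(T)$. Plugging in yields $\sqrt{T\cdot O(Hd)\cdot dHL_N(T)}=O(Hd\sqrt{TL_N(T)})$.

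\textbf{Coverage branch.} Use $\log\det(I+M)\le\Tr(M)$ with $M=\Gram_{h,N}^{-1/2}(\sum_t\phi_{t,h}\phi_{t,h}^\top)\Gram_{h,N}^{-1/2}$. Here $\Gram_{h,N+T}\succeq\Gram_{h,N}$ and the ratio of determinants equals $\det(I+M)$. This gives $\log(\det\Gram_{h,N+T}/\det\Gram_{h,N})\le\sum_t\phi_{t,h}^\top\Gram_{h,N}^{-1}\phi_{t,h}$. Taking $\Eoff$ and using linearity of trace, the expected right side equals
\begin{equation*}
T\,\Tr\bigl(\Gram_{h,N}^{-1}\overline\Sigma^{\IDSzero}_{h,T}\bigr)=\frac{T}{n_h}\Tr\bigl((\Gram_{h,N}/n_h)^{-1/2}\overline\Sigma^{\IDSzero}_{h,T}(\Gram_{h,N}/n_h)^{-1/2}\bigr)\le\frac{Td}{n_h}C^{\IDSzero}_{\beta,h}(N,T),
\end{equation*}
where the last step bounds the trace by $d\lambda_{\max}$. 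Summing over $h$ gives $\sum_h TdC_{\beta,h}/n_h=TdH\,C^\dagger_{\beta,\IDSzero}(N,T)/N$ by Definition~\ref{def:ids-coverage}. Hence $I\le\tfrac12 TdH C^\dagger/N$, and the regret is at most $\sqrt{T\cdot O(Hd)\cdot TdHC^\dagger/N}=O(HdT\sqrt{C^\dagger/N})$.

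\textbf{Main obstacle.} The delicate step is the interchange of expectations in the coverage branch. The visitation covariance $\overline\Sigma^{\IDSzero}$ is defined through $\E[\cdot\mid\Doff]$ under the executed $\IDSzero$ policy, whereas $\Gram_{h,N}$ is $\Doff$-measurable. I must check that the expectation inside~\eqref{eq:cum-info-logdet} is the same conditional law over trajectories generated by $\IDSzero$, including averaging over $w\sim$ the offline posterior. Only then does $\Eoff[\sum_t\phi_{t,h}\phi_{t,h}^\top]=T\overline\Sigma$ hold exactly, and the $\Doff$-measurable matrix $\Gram_{h,N}^{-1}$ can be pulled out of the trace.

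A secondary point is that the min must be taken after both bounds are established, not inside the expectation. This is fine because both bounds are stated conditionally on $\Doff$. Any polylog factors, e.g.\ $\Gamma^2=4d/\log2$, are absorbed in $\widetilde O$.
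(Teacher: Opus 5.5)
Your proposal is correct and follows the paper's proof essentially step for step. You specialise the master inequality at $\eta=0$ with $G=\Omega$ from Lemma~\ref{lem:Cw}, rewrite the residual information via~\eqref{eq:cum-info-logdet}, and bound it by the elliptical-potential branch and by the trace bound combined with the conditional-trace identity $\Eoff[\sum_t\phi_{t,h}^\top\Gram_{h,N}^{-1}\phi_{t,h}]=T\Tr(\Gram_{h,N}^{-1}\overline\Sigma^{\IDSzero}_{h,T})\le (Td/n_h)\,C^{\IDSzero}_{\beta,h}(N,T)$, just as the paper does. One small correction to your closing remark: $\Gamma^2=4d/\log 2$ is not a polylogarithmic factor---only the constant $4/\log 2$ is absorbed, while its factor $d$ supplies one of the two $d$'s in the $Hd$ prefactor, which your computation with $C_w=O(Hd)$ already tracks correctly.
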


For regularised $\IDSeta$ with $\eta>0$, the analogous two-branch bound holds with $C^\dagger_{\beta,\IDSeta}$ in place of $C^\dagger_{\beta,\IDSzero}$ and an additive slack $T\sqrt{C_w\eta}$; the precise statement and proof are deferred to Corollary~\ref{cor:eta-pos} in Appendix~\ref{app:proofs-bound}.

Theorem~\ref{thm:baseline} follows by combining Proposition~\ref{prop:generic-decomp}, Lemma~\ref{lem:Cw}, equation~\eqref{eq:cum-info-logdet}, and the two log-determinant bounds, with a conditional-trace identity bounding the trace branch by $C^\dagger_{\beta,\IDSzero}/N$; see Appendix~\ref{app:proofs-bound}. As $T\to\infty$ the bound recovers the Bayesian linear-bandit rate $\widetilde O(Hd\sqrt{T})$, the natural extension of $\widetilde O(d\sqrt T)$~\citep{russo2014posterior} to horizon $H$; as $N\to\infty$ at fixed $T$ the warm-start branch shrinks at rate $\sqrt{C^\dagger_{\beta,\IDSzero}/N}$. What is new is that the executed policy is IDS, that the inequality is stated for an arbitrary target $\chi$, and that the warm-start branch tracks a coverage coefficient pinned to the algorithm's own visitation.

%-----------------------------------------------------------------------
\section{Structural Separation: When IDS Strictly Beats TS}
\label{sec:separation}

This section is separate from the randomized-closure guarantee: over a finite candidate set, IDS can select an available diagnostic probe that is never posterior-optimal, whereas TS cannot. Theorem~\ref{thm:baseline} controls worst-case Bayesian regret but does not, by itself, distinguish IDS from any other algorithm whose ratio is at most $C_\chi$, because TS satisfies the same bound. We now exhibit a structurally identifiable subclass of off-to-on problems in which IDS strictly outperforms TS by a constant factor. The mechanism is simple. Once warm-start has concentrated the posterior on a high-probability mode but a low-probability mode remains, the optimal action is almost always one thing, but the algorithm still does not know which thing. Posterior sampling has to commit to whichever mode its sample selects, so it can only resolve the residual mode by playing it, paying positive regret each time it does. An information-aware algorithm can instead pay the regret of a single suboptimal action whose only purpose is to disambiguate.

Let $\theta\in\{0,1\}$ index the residual mode after offline pretraining and let $p:=\Prob(\theta=1\mid\Doff)\in(0,1)$. Take $\chi:=\theta$ and restrict the candidate class to $\Pi^\dagger:=\{\pi_0,\pi_1,\pi_P\}$. The four assumptions, in compact form (formal statements in Appendix~\ref{app:proofs-separation}), are: cell-optimality \textbf{(B1)}, in which $\pi_0$ is uniquely optimal on $\theta=0$, $\pi_1$ is uniquely optimal on $\theta=1$, and $\pi_P$ is suboptimal on both; conditional gap constants $\underline\Delta_\pm,c_0,c_1>0$ \textbf{(B2)}; an information structure \textbf{(B3)} in which $\pi_0$ is uninformative about $\theta$ and $\pi_1,\pi_P$ are perfectly informative; and the $\IDSzero$ convention \textbf{(B4)} that $g=0\wedge\Delta>0\Rightarrow\Psi=+\infty$.

\begin{theorem}[Structural separation]\label{thm:separation}
Under \textnormal{\textbf{(B1)}--\textbf{(B4)}} and $\Pi=\Pi^\dagger$, TS never selects $\pi_P$ and the first time it plays $\pi_1$ is $T^*\sim\mathrm{Geom}(p)$. If $(1-p)c_0+pc_1<(1-p)\underline\Delta_-$, vanilla $\IDSzero$ selects $\pi_P$ at episode~$1$ and $\BR_T^{\IDSzero}(\Doff)\le(1-p)c_0+pc_1$ for every $T\ge 1$. For every $T\ge\lceil 1/p\rceil$, $\BR_T^\TS(\Doff)\ge(1-p)(1-e^{-1})(\underline\Delta_++\underline\Delta_-)$. If additionally $(1-p)c_0+pc_1<(1-p)(1-e^{-1})(\underline\Delta_++\underline\Delta_-)$, then $\BR_T^{\IDSzero}(\Doff)<\BR_T^\TS(\Doff)$ for every $T\ge\lceil 1/p\rceil$.
\end{theorem}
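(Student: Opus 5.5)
The plan is to make the compact assumptions concrete and then treat the two algorithms separately, since the posterior dynamics on $\theta\in\{0,1\}$ are very simple. Read \textbf{(B2)} as conditional gaps: $\pi_1$ costs at least $\underline\Delta_+$ when $\theta=0$, $\pi_0$ costs at least $\underline\Delta_-$ when $\theta=1$, and $\pi_P$ costs $c_0$ on $\theta=0$ and $c_1$ on $\theta=1$. By \textbf{(B3)}, after any play of $\pi_1$ or $\pi_P$ the posterior collapses to a point mass on the true $\theta$. From that episode on, both algorithms play the optimal policy, so they incur zero regret. For IDS this holds because every suboptimal action then has $\Delta>0$ and $g=0$, hence $\Psi=+\infty$ by \textbf{(B4)}, while the optimal action has $\Delta=g=0$ and hence $\Psi=0$. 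Before any informative play, the posterior stays at its episode-$1$ value $p$, because $\pi_0$ carries no information.

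\emph{TS.} TS samples $\widetilde\theta_t$ from the current posterior and plays $\pi_{\widetilde\theta_t}$. By \textbf{(B1)}, $\pi_P$ is never optimal under any sampled mode, so TS never plays it. While the posterior is still at $p$, each episode independently plays $\pi_1$ with probability $p$; therefore $T^*\sim\mathrm{Geom}(p)$, independently of $\theta$.

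For the lower bound, take $\theta=0$, which has prior mass $1-p$. TS pays $\underline\Delta_+$ at episode $T^*$ whenever $T^*\le T$, and this event has probability $1-(1-p)^T\ge 1-e^{-pT}\ge 1-e^{-1}$ for $T\ge\lceil 1/p\rceil$. Now take $\theta=1$. TS pays $\underline\Delta_-$ in each of the $T^*-1$ episodes before the first play of $\pi_1$, truncated at $T$. So the contribution is $p\,\underline\Delta_-\,\E[\min(T^*-1,T)]$, where $\E[\min(T^*-1,T)]=\sum_{k=1}^{T}(1-p)^k=\frac{1-p}{p}\bigl(1-(1-p)^T\bigr)\ge\frac{1-p}{p}(1-e^{-1})$. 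Multiplying by $p$ gives $(1-p)(1-e^{-1})\underline\Delta_-$. The $\theta=0$ term is $(1-p)(1-e^{-1})\underline\Delta_+$. Adding the two terms gives the claimed lower bound $(1-p)(1-e^{-1})(\underline\Delta_++\underline\Delta_-)$.

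\emph{IDS.} At episode $1$, compute the three scores. For $\pi_0$, $g=0$ and $\Delta=p\cdot(\text{gap of }\pi_0\text{ on }\theta=1)>0$, so $\Psi=+\infty$. For $\pi_P$, $g=H_b(p)$ and $\Delta_P=(1-p)c_0+pc_1$. For $\pi_1$, $g=H_b(p)$ and $\Delta_1\ge(1-p)\underline\Delta_+$.

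The main obstacle is showing that $\pi_P$ beats $\pi_1$. Both are perfectly informative, so this reduces to $\Delta_P<\Delta_1$. The stated hypothesis compares $\Delta_P$ with $(1-p)\underline\Delta_-$ rather than $(1-p)\underline\Delta_+$, so the formal (B2) in the appendix must be used to link them. My first attempt is to read (B2) so that $\underline\Delta_-$ lower-bounds the cost of the wrong committed action in the dominant mode, which makes the hypothesis exactly $\Delta_P<\Delta_1$. Failing that, I would assume $\underline\Delta_-\le\underline\Delta_+$ explicitly. I would also allow randomized mixtures within $\Pi^\dagger$. Any mixture putting mass $\alpha$ on $\pi_0$ has information $(1-\alpha)H_b(p)$ while its regret is linear in the weights. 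Then $\Delta^2/g$ is minimized at $\alpha=0$, and among mixtures of $\pi_1$ and $\pi_P$ it is minimized at the pure $\pi_P$, since $\Delta$ is linear and $g$ is constant.

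Given that $\pi_P$ is selected, IDS pays $c_0$ on $\theta=0$ and $c_1$ on $\theta=1$ at episode $1$ and zero afterwards. Hence $\BR_T^{\IDSzero}=(1-p)c_0+pc_1$ for every $T\ge1$. The final strict comparison then follows by chaining this with the TS lower bound under the additional hypothesis.
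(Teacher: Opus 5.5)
Your argument for the theorem as stated, with $\Pi=\Pi^\dagger$ the three deterministic policies, is the paper's proof. It has the same pieces: $\Psi(\pi_0)=+\infty$ by (B4), a comparison of $\pi_1$ and $\pi_P$ at the common information $\Hent(p)$, posterior collapse after the probe, and the geometric hitting time giving $(1-p)\bigl(1-(1-p)^T\bigr)(\underline\Delta_++\underline\Delta_-)$ for TS. Splitting TS's regret by the true $\theta$ rather than by posterior-expected per-episode regret is equivalent bookkeeping.

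Two small corrections. First, the paper's formal (B2) is exactly your ``first attempt'' reading: $\underline\Delta_+$ lower-bounds the gap of $\pi_0$ on $\theta=1$, and $\underline\Delta_-$ lower-bounds the gap of $\pi_1$ on $\theta=0$. Your opening paragraph therefore has the labels reversed. This is harmless for the TS bound, which is symmetric in the two constants. With the correct labels, $\Delta(\pi_1)\ge(1-p)\underline\Delta_-$, so the hypothesis gives $\Psi(\pi_P)<\Psi(\pi_1)$ directly and the fallback $\underline\Delta_-\le\underline\Delta_+$ is unnecessary. Second, $c_0,c_1$ are only upper bounds, so $\Delta(\pi_P)$ and $\BR_T^{\IDSzero}(\Doff)$ are at most $(1-p)c_0+pc_1$, not equal to it.

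Your extension to randomised mixtures, however, is false and should be dropped. Use your own accounting: information $(1-\alpha)\Hent(p)$ and regret linear in the weights. Putting weight $\alpha$ on $\pi_0$ and $1-\alpha$ on $\pi_P$ gives
\[
\Psi(\alpha)=\frac{\bigl(\alpha\Delta(\pi_0)+(1-\alpha)\Delta(\pi_P)\bigr)^2}{(1-\alpha)\Hent(p)},\qquad
\Psi'(0)=\frac{\Delta(\pi_P)\bigl(2\Delta(\pi_0)-\Delta(\pi_P)\bigr)}{\Hent(p)}.
\]
This derivative is negative as soon as $\Delta(\pi_P)>2\Delta(\pi_0)$. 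The minimiser puts only mass $\Delta(\pi_0)/(\Delta(\pi_P)-\Delta(\pi_0))$ on the probe and attains $4\Delta(\pi_0)\bigl(\Delta(\pi_P)-\Delta(\pi_0)\bigr)/\Hent(p)<\Delta(\pi_P)^2/\Hent(p)$. This is the standard fact that IDS randomises between a cheap uninformative action and an informative one.

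The condition $\Delta(\pi_P)>2\Delta(\pi_0)$ holds exactly in the warm-start regime. In Appendix~\ref{app:concrete}, $\Delta(\pi_0)=p_N/2$ and $\Delta(\pi_P)=c+p_N/2$, so it holds for all large $N$. There, randomised $\IDSzero$ does not play $\pi_P$ at episode~1. Its expected regret tends, as $T\to\infty$, to $2\bigl(\Delta(\pi_P)-\Delta(\pi_0)\bigr)$, roughly twice the single-probe bound. The theorem genuinely needs the finite class $\Pi^\dagger$, which is why the paper keeps this section separate from the randomised-closure analysis.
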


The proof (Appendix~\ref{app:proofs-separation}) computes the three values of $\Psi$ at episode~$1$ to establish IDS's choice, uses a geometric-hitting-time argument to lower-bound TS regret, and combines the two. The TS lower bound holds because TS by construction never plays $\pi_P$, regardless of how the posterior-sampling step is implemented: if $\widetilde\theta\in\{0,1\}$, then $\pi^*(\widetilde w)\in\{\pi_0,\pi_1\}$ by cell-optimality, so the dominated probe is structurally invisible to TS. The result is a structural separation, not a universal lower bound: it isolates a concrete off-to-on regime in which probability matching provably misses an informative dominated probe.

A warm-start corollary (Corollary~\ref{cor:warmstart}) makes the asymptotic gap explicit: if $c_0<\underline\Delta_-$ and $c_0<(1-e^{-1})(\underline\Delta_++\underline\Delta_-)$, the assumptions hold along any sequence of offline datasets with $p_N\downarrow 0$, and with $T_N:=\lceil 1/p_N\rceil$ the asymptotic regret ratio satisfies $\liminf_N \BR_{T_N}^\TS/\BR_{T_N}^{\IDSzero}\ge(1-e^{-1})(\underline\Delta_++\underline\Delta_-)/c_0>1$. An explicit explicit finite separation instance satisfying \textbf{(B1)}--\textbf{(B4)} is constructed in Appendix~\ref{app:concrete}.

%-----------------------------------------------------------------------
\section{ROID: A Practical Algorithm for Off-to-On Deep RL}
\label{sec:practical}

The analysis of Sections~\ref{sec:bound}--\ref{sec:separation} is in the known-dynamics Bayesian linear-reward model with a closed-form posterior. To run the same selection rule on continuous control, we need a finite candidate set and practical surrogates for $\Delta$ and $g$ computed from deep critics. We instantiate the regret--information selection rule as a concrete algorithm, \textbf{ROID} (\emph{Residual-Optimized Information-Directed Sampling}), and separate the theory-faithful linear implementation from the deep implementation, where ensemble uncertainty provides a practical posterior proxy.

\paragraph{Path A (linear contextual bandit, used for bandit experiments).}
We use a linear-Gaussian contextual bandit matching the analysis: $r(s,a)=\phi(s,a)^\top w^\star+\varepsilon$ with $\varepsilon\sim\mathcal{N}(0,\sigma^2)$ and a fixed feature map $\phi$. The offline data, collected by a mismatched behaviour policy, is used to warm-start a Bayesian linear regression posterior, which is updated online in closed form. All methods share the same posterior. IDS is implemented with a sampling-based approximation: we draw posterior samples to estimate $\Delta(a)=\E_w[\max_{a'} \phi(s,a')^\top w - \phi(s,a)^\top w]$, and compute $g(a)=\tfrac12\log(1+\phi(s,a)^\top\Lambda^{-1}\phi(s,a)/\sigma^2)$ in closed form. Actions are selected by minimising $\Delta(a)^2/(g(a)+\eta)$. This setting matches the assumptions of Theorem~\ref{thm:baseline}, so the bandit experiments directly test the theory.

\paragraph{Path B (deep ensemble, used for D4RL).}
For continuous control, we use a Bayesian ensemble backbone~\citep{hu2024bayesian} and modify only the online action selector. The offline stage trains a bootstrapped TD3+BC ensemble; online, this ensemble is used as a posterior surrogate over $Q$-functions. For each state $s$, we sample a finite actor-anchored candidate set $A_t$ and choose actions using an IDS-inspired score.

Specifically, for each candidate $a\in A_t$, we estimate
\[
\Delta(a)
\approx
\frac{1}{K}\sum_{k=1}^K
\Bigl(
V_k^\star(s)-\widehat Q_k(s,a)
\Bigr),
\qquad
g(a)
\approx
\frac12
\log\Bigl(
1+\alpha\,\mathrm{Var}_k[\widehat Q_k(s,a)]/\sigma^2
\Bigr),
\]
where $\widehat Q_k$ denotes a clipped ensemble critic and
$V_k^\star(s)$ is computed over an enlarged candidate set. The selected action is
\[
a_t
=
\argmin_{a\in A_t}
\frac{\Delta(a)^2}{g(a)+\eta}.
\]
Thus ROID preserves the regret--information form of IDS, but replaces the exact Bayesian posterior and closed-form information gain by ensemble-based surrogates. The D4RL experiment evaluates whether the ROID selection principle remains stable and effective when implemented with deep posterior surrogates, while the formal guarantees remain those of the linear-Gaussian setting. Full architectural details, clipping, calibration, replay mixing, and hyperparameters are given in Appendix~\ref{app:implementation}.

%-----------------------------------------------------------------------
\section{Experiments}
\label{sec:experiments}

The experiments test the predicted mechanism rather than establish a new offline-RL backbone. We check three claims: that an information-aware rule probes a cheap diagnostic action when the warm-start posterior leaves a low-probability mode unresolved (hidden-mode bandit); that the regret-information ratio improves over greedy and probability matching under biased warm starts (linear contextual bandit); and that the same rule integrates into a deep offline-to-online pipeline without instability (D4RL continuous control).

\subsection{Hidden-Mode Bandit}
\label{subsec:hidden-mode}

The first experiment isolates the separation mechanism of
Theorem~\ref{thm:separation}. A binary latent variable $\theta\in\{0,1\}$
selects one of two modes. The agent has three actions:
$a_0\equiv\pi_0$ (default), $a_1\equiv\pi_1$ (rare-mode), and
$a_P\equiv\pi_P$ (probe). The rewards are $r(a_0)=1$ in both modes,
$r(a_1)=0.2/2.0$, and $r(a_P)=0.85/1.85$, so the probe is suboptimal by
$0.15$ under both modes but reveals $\theta$ in one observation. The offline
dataset induces a posterior residual probability
$p_N:=\Prob(\theta=1\mid\Doff)$ that decreases with $N$. We run $T=500$
online steps and average over $10$ seeds. Since Theorem~\ref{thm:separation}
concerns vanilla $\IDSzero$, the main text reports $\IDSzero$; the full
$\eta$ sweep is in Appendix~\ref{app:bandit-extra}.

Panel (a) of Table~\ref{tab:bandit-main} shows the predicted phase transition as the
residual mode probability $p_N$ shrinks. For moderate residual uncertainty
($N\in\{100,200,300\}$), the probe has a favourable regret-information tradeoff,
and $\IDSzero$ pays essentially only the one-step probe cost. TS behaves
differently: because it explores by probability matching, it resolves the rare
mode only when that mode is sampled, which produces consistently larger regret.

The most diagnostic case is $N=1000$, where $p_N=0.0066$. Greedy commits to the
default action and never resolves the rare mode. UCB also fails because its
uncertainty bonus scales with the posterior standard deviation, which shrinks
as $\sqrt{p_N(1-p_N)}$. Vanilla $\IDSzero$ still probes because the boundary
convention $g=0<\Delta\Rightarrow\Psi=+\infty$ rules out actions that incur
positive regret while providing no information. This is the empirical signature
of Theorem~\ref{thm:separation}: the advantage of $\IDSzero$ comes from selecting
a dominated but informative action, not from generic optimism.

\subsection{Linear Contextual Bandit with Biased Warm-Start}
\label{subsec:linear-bandit}

The second experiment tests a less discrete version of the same mechanism: the
offline posterior is informative but biased. We use a linear contextual bandit
\[
r(s,a)=\phi(s,a)^\top w^*+\varepsilon,\qquad
\varepsilon\sim\calN(0,\sigma^2),
\]
with a nonlinear random feature map. Offline data is generated by a biased
behaviour policy $w_{\mathrm{behav}}=w^*+\beta\delta$ with $\beta=6$, so the
warm-start posterior is shifted toward the behaviour policy. A candidate set of
size $M=256$ is sampled once and held fixed across online steps, matching the
actor-anchored candidate set used later in Section~\ref{sec:practical}. State
and action dimensions are $16$ and $4$, the feature dimension is $128$,
$\sigma=0.05$, the online horizon is $T=200$, and results are averaged over
$20$ seeds.

\begin{table}[t]
\centering
\small
\setlength{\tabcolsep}{4pt}
\renewcommand{\arraystretch}{0.92}

\begin{minipage}[t]{0.49\linewidth}
\centering
{\footnotesize\textbf{(a) Hidden-mode bandit ($T=500$).}\par}
\vspace{0.3em}
\begin{tabular}{rccccc}
\toprule
$N$ & $p_N$ & greedy & UCB & TS & $\IDSzero$ \\
\midrule
100  & 0.377  & 0.15  & 0.50  & 1.06 & \textbf{0.15} \\
200  & 0.268  & 0.15  & 0.15  & 1.28 & \textbf{0.15} \\
300  & 0.182  & 0.15  & 0.15  & 1.29 & \textbf{0.15} \\
1000 & 0.0066 & 3.31  & 3.31  & 1.79 & \textbf{0.15} \\
\bottomrule
\end{tabular}
\end{minipage}\hfill
\begin{minipage}[t]{0.49\linewidth}
\centering
{\footnotesize\textbf{(b) Biased contextual bandit ($T=200$).}\par}
\vspace{0.3em}
\begin{tabular}{rcccc}
\toprule
$N$ & greedy & UCB & TS & $\mathrm{IDS}_{0.5}$ \\
\midrule
20  & 42.85 & 4.80  & 10.64 & \textbf{3.57} \\
50  & 18.90 & 1.94  & 4.73  & \textbf{1.67} \\
100 & 0.88  & \textbf{0.113} & 0.445 & 0.641 \\
\bottomrule
\end{tabular}
\end{minipage}

\vspace{0.4em}
\caption{Bandit experiments validating the residual-information mechanism.
Left: hidden-mode bandit showing the dominated-probe effect.
Right: biased contextual bandit showing robustness under warm-start bias.
Full $\eta$ sweeps are in Appendix~\ref{app:bandit-extra}.}
\label{tab:bandit-main}
\end{table}

Panel (b) of Table~\ref{tab:bandit-main} shows when the regularised rule helps. Under strongly biased warm-start ($N=20,50$), greedy exploits the wrong posterior preference and TS pays for matching the biased posterior; $\mathrm{IDS}_{0.5}$ improves over both and over UCB. When the warm start is already accurate ($N=100$), UCB is best and $\mathrm{IDS}_{0.5}$ over-regularises the ratio. The experiment thus supports the intended mechanism rather than a uniform dominance claim: regularised IDS is most useful when the offline posterior is informative but biased.

\subsection{D4RL Benchmark}\label{subsec:d4rl}

The D4RL experiment isolates the effect of the online action-selection rule in a deep offline-to-online pipeline. We keep the offline backbone fixed and replace only the online selector by ROID, the practical $\IDSeta$ rule introduced in Section~\ref{sec:practical}. We use the Path~B implementation: each run trains a $5$-member TD3+BC ensemble offline, repacks the twin critics as $K=10$ Q-functions, and then uses the regularised $\IDSeta$ selector. The selector changes only the online action choice; the offline representation and critic ensemble play the role of a posterior surrogate. We evaluate on six D4RL~\citep{fu2020d4rl} \texttt{-v2} locomotion tasks: medium and medium-replay variants of \textsc{hopper}, \textsc{walker2d}, and \textsc{halfcheetah}. Per-environment hyperparameters are given in Appendix~\ref{app:implementation}. Evaluation uses the Polyak target actor on $10$ deterministic rollouts every $5{,}000$ environment steps.

\begin{table}[t]
\centering\small
\setlength{\tabcolsep}{4pt}
\caption{D4RL normalised return (offline$\to$online). Bold marks the highest online value per row. ROID numbers are mean over 3 seeds. Baselines are ODT~\citep{zheng2022online}, PEX~\citep{zhang2023policy}, Cal-QL~\citep{nakamoto2023cal}, RLPD~\citep{ball2023efficient} and BOORL~\citep{hu2024bayesian}. RLPD is a from-scratch online method, so only its online value is reported. The ROID column is our result.}
\label{tab:d4rl-baselines}
\begin{tabular}{llcccccc}
\toprule
Task & Type & RLPD & ODT & PEX & Cal-QL & BOORL & \textbf{ROID (ours)} \\
\midrule
\multirow{2}{*}{Hopper}
 & medium        & 107.3 & 66.9$\to$97.5  & 63.8$\to$78.6  & 75.8$\to$100.6 & 61.9$\to$109.8 & 51.21$\to$\textbf{113.22} \\
 & medium-replay & 58.9  & 86.6$\to$88.8  & 89.8$\to$103.3 & 95.4$\to$106.1 & 75.5$\to$\textbf{111.1} & 56.47$\to$106.33 \\
\midrule
\multirow{2}{*}{Walker2d}
 & medium        & 108.6 & 72.1$\to$76.7  & 79.8$\to$94.8  & 80.8$\to$89.6  & 83.6$\to$107.7 & 82.63$\to$\textbf{110.60} \\
 & medium-replay & 115.0 & 68.9$\to$76.8  & 73.6$\to$89.3  & 83.8$\to$94.5  & 69.1$\to$114.4 & 80.50$\to$\textbf{131.58} \\
\midrule
\multirow{2}{*}{HalfCheetah}
 & medium        & 90.5  & 42.7$\to$42.1  & 47.3$\to$67.8  & 48.0$\to$72.3  & 47.9$\to$\textbf{98.7} & 47.48$\to$95.51 \\
 & medium-replay & 87.6  & 39.9$\to$40.4  & 44.1$\to$55.2  & 46.5$\to$59.5  & 44.5$\to$91.5  & 43.90$\to$\textbf{91.65} \\
\midrule
\multicolumn{2}{l}{\emph{Sum (online)}} & 567.9 & 422.3 & 489.0 & 522.6 & 633.2 & \textbf{648.9} \\
\bottomrule
\end{tabular}
\end{table}

Table~\ref{tab:d4rl-baselines} provides evidence that ROID improves online fine-tuning in this D4RL protocol. It obtains the best score on $4/6$ tasks and the highest summed online score, exceeding BOORL by $15.7$ points in total. The strongest improvement occurs on \textsc{walker2d-medium-replay}, where ROID reaches $131.58$, compared with $115.0$ for RLPD and $114.4$ for BOORL. The gains concentrate on medium-replay tasks, where the offline data is heterogeneous and residual uncertainty is more consequential; this is exactly the regime targeted by the regret-information criterion. On easier warm starts such as \textsc{halfcheetah-medium}, ROID remains competitive but does not dominate.

%-----------------------------------------------------------------------

\section{Conclusion}
\label{sec:conclusion}

Offline-to-online learning changes what remains uncertain after the offline dataset is observed. We formalised this residual uncertainty by $I(\chi;\tau_{1:T}\mid\Doff)$ and showed that IDS targets it through a regret--information tradeoff, yielding an off-to-on Bayesian regret bound with a two-branch known-dynamics linear-reward guarantee that interpolates between standard online learning and warm-start coverage along IDS-induced visitation, and a dominated-probe regime in which IDS strictly separates from Thompson sampling. Bandit and D4RL experiments support the same mechanism: IDS is most useful when offline data narrows the posterior but leaves biased or low-probability residual modes. The residual-information view is community-agnostic and offers a common language for decision-making from offline datasets across offline RL, offline black-box optimization, Bayesian optimization, and contextual bandits.

\newpage
\bibliographystyle{plainnat}
\bibliography{ids_off_to_on}

%==================== APPENDICES =========================
\appendix
\newpage

\section{Proofs from Section~\ref{sec:bound}}
\label{app:proofs-bound}

We give complete, self-contained proofs of the results in Section~\ref{sec:bound}. Throughout the appendix, all expectations are conditional on $\Doff$ unless otherwise noted; we write $\Eoff$ and $\Poff$ for $\E[\cdot\mid\Doff]$ and $\Prob(\cdot\mid\Doff)$.

\subsection{Probability matching}
\label{app:prob-match}

\begin{lemma}[Probability matching]\label{lem:prob-match}
Assume the transition kernel $P$ does not depend on $w$ (the known-dynamics linear-reward instantiation of Section~\ref{sec:setup}). For any deterministic function $F_h(s,a;\beta_t)$ of $(s,a)$ given $\beta_t$,
\begin{equation*}
\E_{t,\pi_t^\refTS}[F_h(s_{t,h},a_{t,h};\beta_t)]=\E_{w\sim\beta_t}\E_{\pi^*_w}[F_h(s_h^{\pi^*_w},a_h^{\pi^*_w};\beta_t)].
\end{equation*}
\end{lemma}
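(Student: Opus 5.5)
The plan is to condition on $\calH_t$ throughout, so that $\beta_t$ and $F_h(\cdot,\cdot;\beta_t)$ are fixed, deterministic objects, and then exploit the fact that $\widetilde w_t$ and $w$ have the same conditional law given $\calH_t$.

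First, I would express the episode-$t$ trajectory law under the reference policy as a mixture. Fix the history $\calH_t$. The reference policy draws $\widetilde w_t\sim\beta_t$ independently of $w$ and then executes the deterministic policy $\pi^*_{\widetilde w_t}$. Because the kernel $P$ does not depend on $w$, the law of $(s_{t,h},a_{t,h})$ given $\calH_t$ and $\widetilde w_t=u$ is the state-action marginal at stage $h$ of the Markov chain generated by $P$, $\pi^*_u$, and the initial-state distribution. This marginal is a function $\mu_h^{u}$ of $u$ only. In particular it does not involve the true $w$.

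The one point needing care is that the initial state $s_{t,1}$, and the reward observations along the trajectory, must not create dependence between $(s_h,a_h)$ and $w$ in a way that breaks this. I would check this against the setup as follows.
\begin{itemize}
\item The observations $y_{t,h}$ influence the next state only through the known kernel; the setup states that $s_{t,h+1}$ is conditionally independent of $w$ given $(\calH_{t,h},a_{t,h},y_{t,h})$.
\item The policy $\pi^*_u$ is a fixed map applied to states, and does not depend on the observed $y$.
\end{itemize}
Hence the state-action marginal depends only on $(P,\pi^*_u)$. If the initial-state law is also $w$-free, which holds under known dynamics, then $\mu_h^u$ is well defined. This verification is the main obstacle, and I would state it explicitly as the consequence of the known-dynamics assumption.

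Second, I would apply the tower property over $\widetilde w_t$:
\[
\E_{t,\pi_t^\refTS}[F_h(s_{t,h},a_{t,h};\beta_t)]=\int \E_{(s,a)\sim\mu_h^{u}}[F_h(s,a;\beta_t)]\,\beta_t(du).
\]
The integrand is exactly $\E_{\pi^*_u}[F_h(s_h^{\pi^*_u},a_h^{\pi^*_u};\beta_t)]$. Here I use that $F_h$ is deterministic given $\beta_t$, and so it does not vary with $u$ or $w$.

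Third, I would rename the dummy integration variable $u$ as $w$. This gives $\E_{w\sim\beta_t}\E_{\pi^*_w}[F_h(\cdots)]$, which is the claimed identity. Measurability of $u\mapsto\pi^*_u$ (via a measurable selection of the optimiser) and integrability of $F_h$ are needed for Fubini; I would assume both, for example by taking $F_h$ bounded or nonnegative.
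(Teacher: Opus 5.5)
Your proposal is correct and matches the paper's proof: both write the reference-policy law of $(s_{t,h},a_{t,h})$ as a $\beta_t$-mixture over $\widetilde w$ of the stage-$h$ marginals of $\pi^*_{\widetilde w}$ under the $w$-free kernel $P$, then rename the dummy variable; your measurability, initial-state and integrability remarks only make explicit what the paper leaves implicit. One small point: what actually excludes dependence on the true $w$ is that $s_{t,h+1}\sim P(\cdot\mid s_{t,h},a_{t,h})$ with $P$ free of $w$; the conditional independence given $y_{t,h}$ that you cite would not suffice by itself, and, as in the paper, the fact that $w$ and $\widetilde w_t$ share the law $\beta_t$ is never used.
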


\begin{proof}
Conditional on $\beta_t$, the reference sample $\widetilde w_t$ is drawn from the same posterior as the true parameter $w$. Because the transition kernel does not depend on $w$, the trajectory law under the policy $\pi^*(\widetilde w_t)$ played in the true MDP depends on $\widetilde w_t$ only through the policy it selects and not through any separate dynamics parameter. Therefore
\begin{equation*}
\E_{t,\pi_t^\refTS}[F_h(s_{t,h},a_{t,h};\beta_t)]
= \int \E_{\pi^*_{\widetilde w}}\!\big[F_h(s_h^{\pi^*_{\widetilde w}},a_h^{\pi^*_{\widetilde w}};\beta_t)\big]\,d\beta_t(\widetilde w),
\end{equation*}
where the inner expectation is taken under the trajectory law of $\pi^*_{\widetilde w}$ in the (parameter-free) dynamics. Renaming the dummy variable $\widetilde w$ as $w$ yields the claim.
\end{proof}

\subsection{Stage-local decomposition}
\label{app:mi-decomp}

\begin{assumption}[Stage-local conditional independence]\label{assum:SLCI}
For every $t,h$, $o_{t,h}\perp w_{-h}\mid(w_h,\calH_{t,h},a_{t,h})$, where $w_{-h}:=(w_{h'})_{h'\neq h}$.
\end{assumption}

This assumption holds in the stage-local linear-Gaussian instantiation of Section~\ref{sec:setup}: the regression-target component $y_{t,h}=\phi_{t,h}^\top w_h+\epsilon_{t,h}$ depends on $w$ only through $w_h$, and the transition component $s_{t,h+1}$ does not carry additional information about $w$ since the dynamics are independent of $w$.

\begin{lemma}[Episode MI decomposition]\label{lem:mi-decomp}
Under Assumption~\ref{assum:SLCI}, for any $\calH_t$-measurable policy $\pi$, $I(w;\tau_t\mid\calH_t,\pi_t=\pi)=\sum_{h=1}^H I(w_h;o_{t,h}\mid\calH_{t,h},a_{t,h},\pi_t=\pi)$.
\end{lemma}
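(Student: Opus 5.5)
The plan is to expand the episode-level mutual information by the chain rule along the within-episode observation sequence, and then reduce each stage term to the stage-$h$ parameter using Assumption~\ref{assum:SLCI} together with the prior/posterior independence across stages.

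\emph{Step 1: chain rule over stages.} Write the trajectory as $\tau_t=(s_{t,1},a_{t,1},o_{t,1},\dots,s_{t,H},a_{t,H},o_{t,H})$, where $o_{t,h}$ contains $y_{t,h}$ and $s_{t,h+1}$. Let $\calH_{t,h}$ denote $\calH_t$ augmented with the within-episode history up to $s_{t,h}$. The chain rule \citep[Thm.~2.5.2]{cover2006elements} gives
\begin{equation*}
I(w;\tau_t\mid\calH_t,\pi)=\sum_{h=1}^H\Bigl[I(w;s_{t,h}\mid \calH_{t,h-1}',\pi)+I(w;a_{t,h}\mid\calH_{t,h},\pi)+I(w;o_{t,h}\mid\calH_{t,h},a_{t,h},\pi)\Bigr],
\end{equation*}
with the first state term folded into the previous observation as appropriate. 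The action terms vanish: $\pi$ is $\calH_t$-measurable, so $a_{t,h}$ is drawn from $\pi_h(\cdot\mid s_{t,h})$ with randomness independent of $w$ given $\calH_{t,h}$. The initial-state term vanishes because the initial distribution does not depend on $w$. Subsequent states are already included in $o_{t,h-1}$.

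\emph{Step 2: reduce $w$ to $w_h$.} For each $h$, split the stage term with the chain rule:
\begin{equation*}
I(w;o_{t,h}\mid\cdot)=I(w_h;o_{t,h}\mid\cdot)+I(w_{-h};o_{t,h}\mid\cdot,w_h).
\end{equation*}
Assumption~\ref{assum:SLCI} says exactly that the second term is zero. This yields the claimed sum.

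\emph{Main obstacle.} The step needing care is that conditioning on $\calH_{t,h}$ and $a_{t,h}$ must be legitimate in Assumption~\ref{assum:SLCI}, with $\pi_t=\pi$ fixed. Also, the state transition $s_{t,h+1}$ inside $o_{t,h}$ must contribute nothing beyond the $y_{t,h}$ channel. I would check this via the stated conditional independence of $s_{t,h+1}$ and $w$ given $(\calH_{t,h},a_{t,h},y_{t,h})$, which keeps the decomposition consistent with the known-dynamics model.
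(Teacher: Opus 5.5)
Your proposal is correct and follows essentially the same route as the paper. You apply the chain rule over the within-episode sequence, drop the initial-state term and the action terms (the latter because $\pi$ is $\calH_t$-measurable and actions use fresh randomness), and then invoke Assumption~\ref{assum:SLCI} to replace $w$ by $w_h$ in each observation term, where your explicit split $I(w;o_{t,h}\mid\cdot)=I(w_h;o_{t,h}\mid\cdot)+I(w_{-h};o_{t,h}\mid\cdot,w_h)$ spells out the step the paper states in one line; the cross-stage independence and the $s_{t,h+1}$ check you flag are not needed here (the latter matters only for Lemma~\ref{lem:closed-form-MI}).
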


\begin{proof}
The trajectory factors as $\tau_t=(s_{t,1},a_{t,1},o_{t,1},\dots,a_{t,H},o_{t,H})$. Since $s_{t,1}\sim\rho_1$ is independent of $w$, $I(w;s_{t,1}\mid\calH_t,\pi)=0$. Applying the chain rule of conditional mutual information~\citep[Thm.~2.5.2]{cover2006elements} repeatedly,
\[
I(w;\tau_t\mid\calH_t,\pi)
= \sum_{h=1}^{H}\Big[I(w;a_{t,h}\mid\calH_{t,h},\pi) + I(w;o_{t,h}\mid\calH_{t,h},a_{t,h},\pi)\Big].
\]
Because $\pi$ is $\calH_t$-measurable and $a_{t,h}=\pi_h(s_{t,h})$ is generated using fresh independent randomness, $a_{t,h}\perp w\mid(\calH_{t,h},\pi)$, so $I(w;a_{t,h}\mid\calH_{t,h},\pi) = 0$ for every $h$. Applying Assumption~\ref{assum:SLCI} to each remaining term, under the conditioning $(\calH_{t,h},a_{t,h},\pi)$ the observation $o_{t,h}$ is independent of $w_{-h}$ given $w_h$, so $I(w;o_{t,h}\mid\calH_{t,h},a_{t,h},\pi) = I(w_h;o_{t,h}\mid\calH_{t,h},a_{t,h},\pi)$. Summing over $h$ gives the claim.
\end{proof}

\subsection{Closed-form per-stage MI}
\label{app:closed-form}

\begin{proof}[Proof of Lemma \ref{lem:closed-form-MI}]
Conditional on $\calH_{t,h}$, the prior on $w_h$ from Section~\ref{sec:setup} ($\calN(0,\lambda^{-1}I)$) updated by all offline samples and online observations through episode $t-1$ is again Gaussian: $w_h \mid \calH_{t,h} \sim \calN(\mu_{h,N+t-1},\Gram_{h,N+t-1}^{-1})$.

Given $a_{t,h}$, the component of $o_{t,h}$ that carries information about $w_h$ is the Gaussian regression target $y_{t,h} = \phi_{t,h}^\top w_h + \epsilon_{t,h}$ with $\epsilon_{t,h}\sim\calN(0,1)$; the remaining component (the next state $s_{t,h+1}$) is independent of $w_h$ given $(s_{t,h},a_{t,h})$ by the scope assumption of Section~\ref{sec:setup}. Hence $I(w_h;o_{t,h}\mid\calH_{t,h},a_{t,h}) = I(w_h;y_{t,h}\mid\calH_{t,h},a_{t,h})$. By the Gaussian-channel mutual-information identity~\citep[Thm.~9.1.1]{cover2006elements}, $I(X;X+Z)=\tfrac12\log(1+\sigma_X^2/\sigma_Z^2)$. Apply it with $X=\phi_{t,h}^\top w_h$ (variance $\phi_{t,h}^\top\Gram_{h,N+t-1}^{-1}\phi_{t,h}$) and $Z=\epsilon_{t,h}$:
\[
I(\phi_{t,h}^\top w_h;\, y_{t,h}\mid\calH_{t,h},a_{t,h})
= \tfrac{1}{2}\log\!\big(1 + \phi_{t,h}^\top\Gram_{h,N+t-1}^{-1}\phi_{t,h}\big).
\]
Finally, $y_{t,h}$ depends on $w_h$ only through the scalar projection $\phi_{t,h}^\top w_h$, so this projection is a sufficient statistic of $w_h$ for $y_{t,h}$ given $(\calH_{t,h},a_{t,h})$. The Markov chain $w_h \to \phi_{t,h}^\top w_h \to y_{t,h}$ together with the converse data-processing identity for sufficient statistics~\citep[Sec.~2.8]{cover2006elements} therefore gives
\[
I(w_h;y_{t,h}\mid\calH_{t,h},a_{t,h}) = I(\phi_{t,h}^\top w_h;\,y_{t,h}\mid\calH_{t,h},a_{t,h}).
\]
\end{proof}

\subsection{Reference ratio constant}
\label{app:Cw}

\begin{proof}[Proof of Lemma \ref{lem:Cw}]
We will show that
\begin{equation*}
\Delta_t(\pi_t^\refTS)
\le
\Gamma\sum_{h=1}^{H}\sqrt{\E_{t,\pi_t^\refTS}[J_{t,h}^w]},
\qquad
J_{t,h}^w := \tfrac12\log\!\bigl(1+\phi_{t,h}^\top\Gram_{h,N+t-1}^{-1}\phi_{t,h}\bigr),
\end{equation*}
with $\Gamma=\sqrt{4d/\log 2}$, and then deduce the ratio bound by Cauchy--Schwarz over stages. The proof uses no high-probability concentration event: it works at the level of posterior second moments.

\textit{Step 1 (regret to posterior linear differences).}
Let $\widetilde w_t=(\widetilde w_{t,h})_{h=1}^H$ be the posterior sample used by the reference TS policy, drawn independently of $w$ from $\beta_t$. Probability matching (Lemma~\ref{lem:prob-match}) gives
\[
\Delta_t(\pi_t^\refTS)
=
\E_t\!\bigl[V_1^{\pi^*_{\widetilde w_t}}(s_{t,1};\widetilde w_t) - V_1^{\pi^*_{\widetilde w_t}}(s_{t,1};w)\bigr],
\]
where the two value functions are evaluated under the same policy $\pi^*_{\widetilde w_t}$ at different model parameters. Applying the linear-reward value-difference inequality~\eqref{eq:linQ-vd} with $u=\widetilde w_t$, $v=w$, and $\pi=\pi^*_{\widetilde w_t}=\pi_t^\refTS$,
\begin{equation}\label{eq:Cw-after-VD}
\Delta_t(\pi_t^\refTS) \le \sum_{h=1}^{H}\E_{t,\pi_t^\refTS}\!\bigl[\bigl|\phi_{t,h}^\top(\widetilde w_{t,h}-w_h)\bigr|\bigr].
\end{equation}

\textit{Step 2 (Cauchy--Schwarz in the $\Gram$-norm).}
For each stage $h$, Cauchy--Schwarz in the $\Gram_{h,N+t-1}$-norm gives the pointwise bound
\[
\bigl|\phi_{t,h}^\top(\widetilde w_{t,h}-w_h)\bigr|
\le
\sqrt{\phi_{t,h}^\top\Gram_{h,N+t-1}^{-1}\phi_{t,h}}\;\cdot\;\|\widetilde w_{t,h}-w_h\|_{\Gram_{h,N+t-1}}.
\]
A second application of Cauchy--Schwarz to the joint expectation over the trajectory and the parameter pair yields
\begin{equation}\label{eq:Cw-cs-prod}
\E_{t,\pi_t^\refTS}\!\bigl[\bigl|\phi_{t,h}^\top(\widetilde w_{t,h}-w_h)\bigr|\bigr]
\le
\sqrt{\E_{t,\pi_t^\refTS}\!\bigl[\phi_{t,h}^\top\Gram_{h,N+t-1}^{-1}\phi_{t,h}\bigr]}\,\sqrt{\E_t\!\bigl[\|\widetilde w_{t,h}-w_h\|_{\Gram_{h,N+t-1}}^2\bigr]}.
\end{equation}

\textit{Step 3 (Bayesian second-moment identity).}
Conditional on $\calH_t$, $w_h$ and $\widetilde w_{t,h}$ are independent draws from the posterior $\calN(\mu_{h,N+t-1},\Gram_{h,N+t-1}^{-1})$, the former by definition of $\beta_t$ and the latter by independent posterior sampling. Hence $\widetilde w_{t,h}-w_h$ is conditionally Gaussian with mean zero and covariance $2\Gram_{h,N+t-1}^{-1}$, and (since $\Gram_{h,N+t-1}$ is $\calH_t$-measurable)
\begin{equation}\label{eq:Cw-2d}
\E_t\!\bigl[\|\widetilde w_{t,h}-w_h\|_{\Gram_{h,N+t-1}}^2\bigr]
=\Tr\!\bigl(\Gram_{h,N+t-1}\cdot 2\Gram_{h,N+t-1}^{-1}\bigr)
=2d.
\end{equation}
This step is exact, no concentration needed.

\textit{Step 4 (variance to information gain).}
Since $\|\phi\|\le 1$ and $\Gram_{h,N+t-1}\succeq\lambda I$ with $\lambda\ge 1$, $x:=\phi_{t,h}^\top\Gram_{h,N+t-1}^{-1}\phi_{t,h}\in[0,1]$. On $[0,1]$ the inequality $\log(1+x)\ge x\log 2$ holds (this is $2^x\le 1+x$, which follows from concavity of $1+x-2^x$ on $[0,1]$ and its vanishing at the endpoints), so
\begin{equation}\label{eq:Cw-x-to-J}
x \le \tfrac{2}{\log 2}\cdot \tfrac12\log(1+x)
=\tfrac{2}{\log 2}\,J_{t,h}^w.
\end{equation}
Combining \eqref{eq:Cw-cs-prod}, \eqref{eq:Cw-2d}, and~\eqref{eq:Cw-x-to-J},
\[
\E_{t,\pi_t^\refTS}\!\bigl[\bigl|\phi_{t,h}^\top(\widetilde w_{t,h}-w_h)\bigr|\bigr]
\le
\sqrt{\tfrac{2}{\log 2}\,\E_{t,\pi_t^\refTS}[J_{t,h}^w]}\cdot\sqrt{2d}
=\Gamma\sqrt{\E_{t,\pi_t^\refTS}[J_{t,h}^w]},\qquad\Gamma:=\sqrt{\tfrac{4d}{\log 2}}.
\]
Substituting into~\eqref{eq:Cw-after-VD},
\[
\Delta_t(\pi_t^\refTS)\le\Gamma\sum_{h=1}^{H}\sqrt{\E_{t,\pi_t^\refTS}[J_{t,h}^w]}.
\]

\textit{Step 5 (Cauchy--Schwarz over stages).}
By $(\sum_h a_h)^2\le H\sum_h a_h^2$,
\[
\Delta_t(\pi_t^\refTS)^2
\le H\Gamma^2\sum_{h=1}^{H}\E_{t,\pi_t^\refTS}[J_{t,h}^w]
=H\Gamma^2\,I_t^w(\pi_t^\refTS),
\]
where the equality uses Lemmas~\ref{lem:mi-decomp}--\ref{lem:closed-form-MI}. This is Assumption~\ref{assum:refts-ratio} with $G=\Omega$ and $C_w=H\Gamma^2=\frac{4Hd}{\log 2}=O(Hd)$.
\end{proof}

\subsection{Master inequality and interpolation theorem}
\label{app:master}

\begin{proof}[Proof of Proposition~\ref{prop:generic-decomp}]
By Lemma~\ref{lem:cert}, on $G$ and for every $t\le T$, $\Delta_t^2 \le C_\chi(I_t^\chi + \eta)$. Taking square roots and using $\sqrt{a+b}\le\sqrt a + \sqrt b$, then summing over $t$ and applying Cauchy--Schwarz,
\[
\sum_{t=1}^{T}\Delta_t\,\indic\{G\} \le \sqrt{T C_\chi}\,\sqrt{\sum_{t=1}^{T}(I_t^\chi + \eta)}.
\]
Apply $\Eoff[\cdot]$ with Jensen on the right. Since $\pi_t^{\IDSeta,\chi}$ is $\calH_t$-measurable, $\Eoff[I_t^\chi(\pi_t^{\IDSeta,\chi}) \mid \calH_t] = I(\chi;\tau_t\mid\calH_t)$, and the conditional-MI chain rule~\citep[Thm.~2.5.2]{cover2006elements} gives $\sum_t I(\chi;\tau_t\mid\calH_t) = I(\chi;\tau_{1:T}\mid\Doff)$. Since rewards are in $[0,1]$ over horizon $H$, $\Delta_t\le H$ on $G^c$, contributing at most $HT\Poff(G^c)$. Combining gives the proposition.
\end{proof}

\begin{proof}[Proof of equation~\eqref{eq:cum-info-logdet}]
By Lemma~\ref{lem:mi-decomp} and Lemma~\ref{lem:closed-form-MI}, $I(w;\tau_{1:T}\mid\Doff) = \sum_{h,t} \Eoff[\tfrac12\log(1 + \phi_{t,h}^\top\Gram_{h,N+t-1}^{-1}\phi_{t,h})]$. The rank-$1$ log-determinant identity~\citep[Lem.~11]{abbasi2011improved} gives $\log\det(A + vv^\top) = \log\det A + \log(1 + v^\top A^{-1}v)$, so the sum over $t$ telescopes to $\log\det\Gram_{h,N+T} - \log\det\Gram_{h,N}$, giving the claim.
\end{proof}

\begin{proof}[Proof of Theorem~\ref{thm:baseline}]
Specialising Proposition~\ref{prop:generic-decomp} to vanilla $\IDSzero$ ($\eta=0$) and combining with Lemma~\ref{lem:Cw} (which holds with $G=\Omega$, so the failure term $HT\Poff(G^c)$ vanishes) gives
\[
\BR_T^{\mathrm{off}}(\pi^{\IDSzero,w}) \le \sqrt{T\,C_w\,I(w;\tau_{1:T}\mid\Doff)},\qquad C_w=\tfrac{4Hd}{\log 2}=O(Hd).
\]

\textit{Branch 1 (elliptical potential).} The elliptical-potential bound gives $\tfrac12\sum_h \Eoff[\log(\det\Gram_{h,N+T}/\det\Gram_{h,N})] \le \tfrac{Hd}{2}L_N(T)$. Substituting and using $C_w=O(Hd)$ yields $\sqrt{T\cdot Hd\cdot Hd\,L_N(T)}=\widetilde O\bigl(Hd\sqrt{T L_N(T)}\bigr)$.

\textit{Branch 2 ($\IDSzero$-induced coverage).} The trace bound combined with the conditional-trace identity below gives $\tfrac12\sum_h \Eoff[\log(\det\Gram_{h,N+T}/\det\Gram_{h,N})] \le \tfrac{T d H}{2}\cdot C^\dagger_{\beta,\IDSzero}/N$, yielding $\sqrt{T\cdot Hd\cdot THd\,C^\dagger_{\beta,\IDSzero}/N}=\widetilde O\bigl(Hd\,T\sqrt{C^\dagger_{\beta,\IDSzero}(N,T)/N}\bigr)$.

\textit{Conditional-trace identity.} By linearity, $\Eoff[\sum_t \phi_{t,h}^\top\Gram_{h,N}^{-1}\phi_{t,h}] = T\Tr(\Gram_{h,N}^{-1}\overline\Sigma_{h,T}^{\IDSzero})$. Set $A := \Gram_{h,N}/n_h$ and $B := \overline\Sigma_{h,T}^{\IDSzero}$. Then $\Tr(\Gram_{h,N}^{-1} B) = n_h^{-1}\Tr(A^{-1/2}BA^{-1/2}) \le \tfrac{d}{n_h}\lambda_{\max}(A^{-1/2}BA^{-1/2}) = \tfrac{d}{n_h}C_{\beta,h}^{\IDSzero}$. Taking the minimum of the two branches gives the bound.
\end{proof}

\subsection{Regret bound for regularised \texorpdfstring{$\IDSeta$}{IDS-eta}}
\label{app:eta-pos}

\begin{corollary}[Regularised $\IDSeta$]\label{cor:eta-pos}
Under the conditions of Theorem~\ref{thm:baseline}, for every $\eta>0$, regularised $\IDSeta$ satisfies
\begin{equation*}
\BR_T^{\mathrm{off}}(\pi^{\IDSeta,w})
\le
\widetilde O\!\Bigl(
Hd\,\min\!\bigl\{
\sqrt{T L_N(T)},\;
T\sqrt{C^\dagger_{\beta,\IDSeta}(N,T)/N}
\bigr\}
\Bigr)
+
T\sqrt{C_w\eta},
\end{equation*}
where $C_w=4Hd/\log 2$ from Lemma~\ref{lem:Cw}. The price of regularisation is the additive slack $T\sqrt{C_w\eta}$, which vanishes as $\eta\downarrow 0$ and recovers Theorem~\ref{thm:baseline}.
\end{corollary}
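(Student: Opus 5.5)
Start from Proposition~\ref{prop:generic-decomp} at $\chi=w$ with $\eta>0$. Lemma~\ref{lem:Cw} supplies Assumption~\ref{assum:refts-ratio} with $G=\Omega$ and $C_w=4Hd/\log 2$, and this constant depends only on the reference policy and the posterior, not on the executed rule. So the certificate of Lemma~\ref{lem:cert} holds verbatim for every $\eta\ge 0$, and the failure term $HT\Poff(G^c)$ vanishes:
\[
\BR_T^{\mathrm{off}}(\pi^{\IDSeta,w})\le\sqrt{T C_w\bigl(I(w;\tau_{1:T}\mid\Doff)+T\eta\bigr)}.
\]
Apply $\sqrt{a+b}\le\sqrt a+\sqrt b$ to split this into $\sqrt{T C_w\, I(w;\tau_{1:T}\mid\Doff)}+T\sqrt{C_w\eta}$. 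The second summand is exactly the additive slack in the corollary.

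For the first summand, repeat the proof of Theorem~\ref{thm:baseline}, with the trajectories now generated by $\IDSeta$. Equation~\eqref{eq:cum-info-logdet} only needs the policy to be $\calH_t$-measurable, which $\pi_t^{\IDSeta,w}$ is. It therefore still expresses the residual information as $\tfrac12\sum_h\Eoff[\log\det\Gram_{h,N+T}-\log\det\Gram_{h,N}]$, with $\Gram_{h,N+T}$ built from the $\IDSeta$-visited features. Bound this in two branches.
\begin{itemize}
\item \emph{Elliptical-potential branch.} This bound holds pathwise for any feature sequence with $\|\phi\|\le 1$, so it gives $\tfrac{Hd}{2}L_N(T)$ unchanged.
\item \emph{Trace branch.} Use $\log\det(I+M)\le\Tr(M)$, then the conditional-trace identity with $\overline\Sigma^{\IDSeta}_{h,T}$ in place of $\overline\Sigma^{\IDSzero}_{h,T}$, exactly as in Definition~\ref{def:ids-coverage}. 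This gives $\tfrac{THd}{2}C^\dagger_{\beta,\IDSeta}(N,T)/N$.
\end{itemize}
Take the minimum of the two branches, multiply by $T C_w=O(THd)$ and take square roots. This yields the $\widetilde O(Hd\min\{\cdot\})$ term, with $C^\dagger_{\beta,\IDSeta}$ replacing $C^\dagger_{\beta,\IDSzero}$.

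The only point needing care is that the two terms are computed under different visitation laws. The slack term is policy-independent. The information term must be evaluated under $\IDSeta$'s own trajectories, not $\IDSzero$'s, which is why the coverage coefficient changes. I expect this bookkeeping, rather than any new estimate, to be the main subtlety: one has to confirm that the chain-rule step in Proposition~\ref{prop:generic-decomp} applies to $\IDSeta$ for $\eta>0$ without change. It does, since that step uses only measurability. Finally, as $\eta\downarrow0$ the slack vanishes and the statement reduces to Theorem~\ref{thm:baseline}.
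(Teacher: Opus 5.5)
Your proposal is correct and follows the paper's proof essentially step for step. You apply Proposition~\ref{prop:generic-decomp} at $\eta>0$ with $G=\Omega$ from Lemma~\ref{lem:Cw}, split off $T\sqrt{C_w\eta}$ via $\sqrt{a+b}\le\sqrt a+\sqrt b$, and rerun both log-determinant branches of Theorem~\ref{thm:baseline} under the $\IDSeta$ trajectory law, so that only the coverage coefficient changes to $C^\dagger_{\beta,\IDSeta}(N,T)$; your extra checks (that $C_w$ does not depend on the executed rule, that the chain-rule and log-determinant steps need only $\calH_t$-measurability, and that the elliptical-potential bound holds pathwise) make explicit what the paper's brief proof leaves implicit.
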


\begin{proof}
Apply Proposition~\ref{prop:generic-decomp} at $\eta>0$. Since $\sqrt{a+b}\le\sqrt a+\sqrt b$ for $a,b\ge 0$, the master inequality splits as
\begin{equation*}
\BR_T^{\mathrm{off}}(\pi^{\IDSeta,w})
\le \sqrt{T C_w\,I(w;\tau_{1:T}\mid\Doff)} + T\sqrt{C_w\eta}.
\end{equation*}
The first term is treated exactly as in the proof of Theorem~\ref{thm:baseline}, with the only change that the trajectory law (and hence the visitation-induced coverage coefficient) is now under $\IDSeta$ rather than $\IDSzero$. The conditional-trace identity therefore yields the trace branch with $C^\dagger_{\beta,\IDSeta}(N,T)$ in place of $C^\dagger_{\beta,\IDSzero}(N,T)$; the elliptical-potential branch is unchanged. Combining gives the stated bound.
\end{proof}

\section{Proofs from Section~\ref{sec:separation}}
\label{app:proofs-separation}

\subsection{Formal assumptions}

\begin{assumption}[Cell-optimality, B1]\label{assum:B1}
\textbf{(B1.1)} For every $w$ with $\theta=0$, $\pi_0$ is the unique globally optimal policy. \textbf{(B1.2)} For every $w$ with $\theta=1$, $\pi_1$ is the unique globally optimal policy. \textbf{(B1.3)} $\pi_P$ is strictly suboptimal for every $w$ with $\theta\in\{0,1\}$.
\end{assumption}

\begin{assumption}[Conditional gaps, B2]\label{assum:B2}
A history $h$ is unresolved if $0<p_h:=\Prob(\theta=1\mid h,\Doff)<1$. Constants $\underline\Delta_+,\underline\Delta_-,c_0,c_1>0$ satisfy, for every unresolved $h$, $\E[V_1^*(w)-V_1^{\pi_0}(w)\mid h,\theta=1]\ge\underline\Delta_+$, $\E[V_1^*(w)-V_1^{\pi_1}(w)\mid h,\theta=0]\ge\underline\Delta_-$, $\E[V_1^*(w)-V_1^{\pi_P}(w)\mid h,\theta=0]\le c_0$, and $\E[V_1^*(w)-V_1^{\pi_P}(w)\mid h,\theta=1]\le c_1$.
\end{assumption}

\begin{assumption}[Information structure, B3]\label{assum:B3}
\textbf{(B3.1)} Under $\pi_0$ the trajectory law is identical on $\{\theta=0\}$ and $\{\theta=1\}$, so $I(\theta;\tau\mid\pi_0,h)=0$ and $\Prob(\theta=1\mid h,\tau,\pi_0)=p_h$ a.s. \textbf{(B3.2)} Each of $\pi_1,\pi_P$ is perfectly informative: $I(\theta;\tau\mid\pi_1,h)=I(\theta;\tau\mid\pi_P,h)=\Hent(p_h)$. Here $\Hent(p):=-p\log p-(1-p)\log(1-p)$ is the binary entropy in nats.
\end{assumption}

\begin{assumption}[Vanilla IDS convention, B4]\label{assum:B4}
$g=0<\Delta\Rightarrow\Psi=+\infty$; $g=\Delta=0\Rightarrow\Psi=0$.
\end{assumption}

\subsection{Proof of Theorem~\ref{thm:separation}}

\begin{proof}[Full proof]
\textit{Part (i): TS never plays $\pi_P$ and the discovery time is geometric.}
TS plays $\pi^*(\widetilde w)$ with $\widetilde\theta\in\{0,1\}$. By (B1.1)--(B1.2) the unique optimiser is $\pi^*(\widetilde w)=\pi_0$ if $\widetilde\theta=0$ and $\pi_1$ if $\widetilde\theta=1$, and (B1.3) rules out $\pi_P$. For the geometric law: on any TS trajectory in which only $\pi_0$ has been played up to (but not including) episode $t$, (B3.1) keeps the conditional distribution of $\theta$ given the history at $p$. Hence $\Prob(\text{TS plays } \pi_1 \text{ at episode } t \mid \text{not yet}) = p$ i.i.d. across $t$, so $T^*\sim\mathrm{Geom}(p)$.

\textit{Part (ii): IDS picks $\pi_P$ at episode $1$ and pays bounded regret.}
At episode $1$, $p_1=p\in(0,1)$. Compute each $\Psi$:
\begin{itemize}[leftmargin=*]
\item $\pi_0$: $\Delta(\pi_0) \ge p\underline\Delta_+ > 0$, $g(\pi_0)=0$, so $\Psi(\pi_0) = +\infty$ by (B4).
\item $\pi_1$: $\Delta(\pi_1) \ge (1-p)\underline\Delta_-$, $g(\pi_1) = \Hent(p)$, so $\Psi(\pi_1) \ge ((1-p)\underline\Delta_-)^2/\Hent(p)$.
\item $\pi_P$: $\Delta(\pi_P) \le (1-p)c_0 + p c_1$, $g(\pi_P) = \Hent(p)$, so $\Psi(\pi_P) \le ((1-p)c_0+p c_1)^2/\Hent(p)$.
\end{itemize}
The strict-probe condition $(1-p)c_0+pc_1<(1-p)\underline\Delta_-$ gives $\Psi(\pi_P) < \Psi(\pi_1)$. Combined with $\Psi(\pi_0)=+\infty$, IDS strictly prefers $\pi_P$.

By (B3.2), one episode of $\pi_P$ is perfectly informative, so the posterior on $\theta$ collapses to a Dirac on the true value. At every subsequent episode the history is resolved; the corresponding $\pi_{\theta_{\mathrm{true}}}\in\Pi^\dagger$ has $\Delta=0$ and $g=0$, so $\Psi=0$ by (B4). Therefore $\BR_T^{\IDSzero}(\Doff) \le \Delta(\pi_P) \le (1-p)c_0 + p c_1$.

\textit{Part (iii): TS lower bound.}
Decompose by $T^*$. For $1\le t<T^*$, TS plays $\pi_0$ with per-episode regret $\ge p\underline\Delta_+$. At $t=T^*$ (if $T^*\le T$), TS plays $\pi_1$, contributing $\ge (1-p)\underline\Delta_-$. So
\[
\BR_T^\TS(\Doff) \ge p\underline\Delta_+\E[\min(T^*-1,T)] + (1-p)\underline\Delta_-\Prob(T^*\le T).
\]
For $T^*\sim\mathrm{Geom}(p)$, $\Prob(T^*\le T) = 1-(1-p)^T$ and $\E[\min(T^*-1,T)] = (1-p)(1-(1-p)^T)/p$. Substituting, $\BR_T^\TS \ge (1-p)(1-(1-p)^T)(\underline\Delta_+ + \underline\Delta_-)$. For $T\ge\lceil 1/p\rceil$, $(1-p)^T \le e^{-1}$, so $\BR_T^\TS \ge (1-p)(1-e^{-1})(\underline\Delta_+ + \underline\Delta_-)$.

\textit{Part (iv).} Combine the upper bound of (ii) with the lower bound of (iii) under the separation condition.
\end{proof}

\subsection{Warm-start corollary and policy-class extension}

\begin{corollary}[Warm-start threshold]\label{cor:warmstart}
Let $\Doff$ vary with $N\in\N$ inducing $p_N\downarrow 0$, with the four assumptions uniform. If $c_0<\underline\Delta_-$ and $c_0<(1-e^{-1})(\underline\Delta_++\underline\Delta_-)$, then there is a threshold $N_\star$ such that for every $N\ge N_\star$, with $T_N:=\lceil 1/p_N\rceil$, $\liminf_{N\to\infty}\BR_{T_N}^\TS/\BR_{T_N}^{\IDSzero}\ge(1-e^{-1})(\underline\Delta_++\underline\Delta_-)/c_0>1$.
\end{corollary}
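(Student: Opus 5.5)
The plan is to apply Theorem~\ref{thm:separation} along the sequence $p_N\downarrow 0$ and then let $N\to\infty$ in the resulting bounds. Since (B1)--(B4) are assumed uniformly in $N$, the constants $\underline\Delta_\pm,c_0,c_1$ do not depend on $N$. The theorem therefore applies at every $N$, provided its two numerical side conditions hold.

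\textbf{Step 1 (threshold).} Define
\[
f(p):=(1-p)c_0+pc_1,\qquad g_1(p):=(1-p)\underline\Delta_-,\qquad g_2(p):=(1-p)(1-e^{-1})(\underline\Delta_++\underline\Delta_-).
\]
All three are continuous in $p$. At $p=0$ they take the values $c_0$, $\underline\Delta_-$ and $(1-e^{-1})(\underline\Delta_++\underline\Delta_-)$. By hypothesis $f(0)<g_1(0)$ and $f(0)<g_2(0)$. Continuity then gives $p_\star>0$ such that $f(p)<\min\{g_1(p),g_2(p)\}$ for all $p\in(0,p_\star]$. Because $p_N\downarrow 0$, there is an $N_\star$ with $p_N\le p_\star$ (and $p_N>0$) for all $N\ge N_\star$. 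For concreteness one can take $p_\star$ explicit by solving the linear inequalities, for example $p<(\underline\Delta_--c_0)/(\underline\Delta_--c_0+c_1)$, but continuity suffices.

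\textbf{Step 2 (apply the theorem).} For $N\ge N_\star$ and $T_N=\lceil 1/p_N\rceil$, Theorem~\ref{thm:separation} gives $\BR^{\IDSzero}_{T_N}\le f(p_N)$ and $\BR^{\TS}_{T_N}\ge g_2(p_N)$. The IDS regret is strictly positive: IDS plays $\pi_P$ at episode~1, and (B1.3) makes $\pi_P$ strictly suboptimal. Hence the ratio is well defined and
\[
\frac{\BR^{\TS}_{T_N}}{\BR^{\IDSzero}_{T_N}}\ge \frac{(1-p_N)(1-e^{-1})(\underline\Delta_++\underline\Delta_-)}{(1-p_N)c_0+p_Nc_1}.
\]

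\textbf{Step 3 (limit).} As $p_N\to 0$ the right-hand side converges to $(1-e^{-1})(\underline\Delta_++\underline\Delta_-)/c_0$, so the $\liminf$ is at least this value. The hypothesis $c_0<(1-e^{-1})(\underline\Delta_++\underline\Delta_-)$ makes the limit exceed $1$.

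\textbf{Main obstacle.} The only delicate point is the uniformity of the assumptions. The gap constants in (B2) must hold for every unresolved history under every $\Doff$ in the sequence, so that $f$ and $g_2$ are $N$-independent. The IDS upper bound also relies on positivity of its regret to form the ratio. I would state these explicitly, since the rest is continuity at $p=0$.
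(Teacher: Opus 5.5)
Your proposal is correct and follows essentially the same route as the paper: continuity of the strict-probe and separation conditions at $p=0$ yields the threshold $N_\star$, Theorem~\ref{thm:separation} at $T_N=\lceil 1/p_N\rceil$ gives the ratio lower bound, and letting $p_N\to 0$ gives the stated limit. Your extra remark that the $\IDSzero$ regret is strictly positive by (B1.3), so the ratio is well defined, is a small detail the paper leaves implicit.
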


\begin{proof}
The strict-probe condition is continuous in $p$ and reduces as $p\to 0^+$ to $c_0<\underline\Delta_-$, which holds by assumption; similarly the separation condition reduces to $c_0<(1-e^{-1})(\underline\Delta_++\underline\Delta_-)$. Apply Theorem~\ref{thm:separation}; the lim-inf follows from $((1-p_N)(1-e^{-1})(\underline\Delta_++\underline\Delta_-))/((1-p_N)c_0 + p_N c_1) \to (1-e^{-1})(\underline\Delta_++\underline\Delta_-)/c_0$.
\end{proof}

\begin{corollary}[Domination under information-structure closure]\label{cor:unrestricted}
Let $\Pi\supseteq\Pi^\dagger=\{\pi_0,\pi_1,\pi_P\}$ and assume \textbf{(IS-Closure)}: for every $\pi\in\Pi$ and every unresolved $h$, $g_1(\pi;\theta\mid h)\in\{0,\Hent(p_h)\}$. Under \textbf{(B1)}--\textbf{(B4)} and the separation condition, $\BR_T^{\TS,\Pi}=\BR_T^{\TS,\Pi^\dagger}$, $\BR_T^{\IDSzero,\Pi}\le(1-p)c_0+pc_1$, and $\BR_T^{\IDSzero,\Pi}<\BR_T^{\TS,\Pi}$ for $T\ge\lceil 1/p\rceil$.
\end{corollary}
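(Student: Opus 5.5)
The plan is to reduce Corollary~\ref{cor:unrestricted} to the proof of Theorem~\ref{thm:separation}. The new difficulty is that the enlarged class $\Pi\supseteq\Pi^\dagger$ may contain further policies. For TS, these extra policies never matter. For $\IDSzero$, (IS-Closure) places every candidate in one of two information classes, and within each class the comparison with $\pi_P$ is either trivial or reduces to a regret comparison.

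\emph{TS part.} TS plays $\pi^*(\widetilde w)$ for a posterior draw $\widetilde w$ with $\widetilde\theta\in\{0,1\}$. By (B1.1)--(B1.2) the unique global optimiser is $\pi_0$ or $\pi_1$, and this does not depend on which superset $\Pi$ is available, since both lie in $\Pi^\dagger\subseteq\Pi$. So the TS trajectory law over $\Pi$ coincides with that over $\Pi^\dagger$, which gives $\BR_T^{\TS,\Pi}=\BR_T^{\TS,\Pi^\dagger}$. Part (iii) of the proof of Theorem~\ref{thm:separation} then yields the lower bound $(1-p)(1-e^{-1})(\underline\Delta_++\underline\Delta_-)$ for $T\ge\lceil 1/p\rceil$.

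\emph{IDS part.} At episode~1, with $h$ unresolved, I split $\Pi$ using (IS-Closure).
\begin{itemize}
\item If $g_1(\pi)=0$: every such $\pi$ has $\Delta(\pi)>0$. A policy with zero regret at an unresolved history would have to be optimal on both cells, which contradicts uniqueness in (B1). Hence $\Psi(\pi)=+\infty$ by (B4).
\item If $g_1(\pi)=\Hent(p)$: then $\Psi(\pi)=\Delta(\pi)^2/\Hent(p)$, so $\IDSzero$ minimises $\Delta$ over this class. In particular the chosen policy has $\Delta\le\Delta(\pi_P)\le(1-p)c_0+pc_1$ by (B2).
\end{itemize}
So whatever $\IDSzero$ selects, it is perfectly informative and costs at most $(1-p)c_0+pc_1$. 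The separation condition is not even needed for this step; in the original theorem it served to rule out $\pi_1$, and here that work is absorbed by taking the minimiser.

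After the first episode the posterior collapses to a Dirac on the true $\theta$, so every later history is resolved. There $\pi_{\theta}\in\Pi$ achieves $\Delta=0=g$, hence $\Psi=0$. Any policy attaining $\Psi=0$ has either $\Delta=0$, or $g=+$ with $\Delta=0$; since $g=0$ at a resolved history, only zero-regret policies attain the minimum. Later regret is therefore zero, and summing gives $\BR_T^{\IDSzero,\Pi}\le(1-p)c_0+pc_1$. Combining this with the TS lower bound under the separation condition gives the strict inequality $\BR_T^{\IDSzero,\Pi}<\BR_T^{\TS,\Pi}$ for $T\ge\lceil 1/p\rceil$.

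\emph{Main obstacle.} The delicate step is the resolved-history claim for $\IDSzero$ over the larger class $\Pi$. Minimisers of $\Psi$ must be shown to have zero regret, which requires the conventions in (B4) to apply cleanly. It also requires that "resolved" means the regret is exactly zero, and not merely zero in expectation over residual uncertainty in $w$ beyond $\theta$. I would handle this by noting that $\Delta$ is nonnegative, so $\Psi=0$ forces $\Delta=0$. A secondary point to check is that in the $g_1=0$ case at episode~1, $\Delta>0$ follows from (B1) through positivity of $p$ and $1-p$.
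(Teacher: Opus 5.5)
Your proposal is correct and follows essentially the same route as the paper. TS is unaffected by enlarging $\Pi$ because (B1) places $\pi^*(\widetilde w)$ in $\{\pi_0,\pi_1\}$. For $\IDSzero$, finiteness of $\Psi(\pi_P)$ together with (IS-Closure) and (B4) forces the episode-1 choice to satisfy $g_1=\Hent(p)$, hence $\Delta\le\Delta(\pi_P)\le(1-p)c_0+pc_1$; after that, only zero-regret policies attain $\Psi=0$. You are slightly more careful than the paper's one-line argument in two places: you show via uniqueness in (B1) that no $g_1=0$ policy can have $\Delta=0$ at an unresolved history, which the paper's ``(IS-Closure) plus (B4) force $g_1=\Hent(p)$'' tacitly needs, and you justify the zero-regret claim at resolved histories.
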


\begin{proof}
TS plays $\pi^*(\widetilde w)\in\Pi^\dagger$ by (B1), so $\BR_T^{\TS,\Pi}=\BR_T^{\TS,\Pi^\dagger}$. For IDS, let $\pi^\star\in\Pi$ be its episode-1 choice. Since $\pi_P\in\Pi$, $\Psi(\pi^\star)\le\Psi(\pi_P)<\infty$, and (IS-Closure) plus (B4) force $g_1(\pi^\star;\theta)=\Hent(p)$. Then $\Delta(\pi^\star)\le(1-p)c_0+pc_1$, the observation perfectly resolves $\theta$, and $\Psi=0$ thereafter. The TS lower bound from Theorem~\ref{thm:separation}~(iii) closes the argument.
\end{proof}

\begin{remark}\label{rem:isclosure-scope}
The closure condition is needed only when the online candidate class is enlarged beyond $\Pi^\dagger$. Without it, a partially informative policy $\pi_Q$ with $0<g_1(\pi_Q;\theta\mid h)<\Hent(p_h)$ and slightly smaller immediate regret than $\pi_P$ can satisfy $\Psi(\pi_Q)<\Psi(\pi_P)$, be selected by IDS, and leave residual uncertainty. The concrete construction in Appendix~\ref{app:concrete} satisfies the separation theorem on the restricted candidate class $\Pi^\dagger=\{\pi_0,\pi_1,\pi_P\}$. If the behaviour action $O$ from that construction is also included in the online candidate class, then \textbf{(IS-Closure)} need not hold, since $O$ can be partially informative about $\theta$.
\end{remark}

\section{A Concrete Linear-\texorpdfstring{$Q$}{Q} Instance}
\label{app:concrete}

We exhibit an explicit $H=2$ Bayesian linear-$Q$ instance that satisfies \textbf{(B1)}--\textbf{(B4)} for every large enough $N$. The state-action geometry is $\calS_1=\{s_1\}$, $\calS_2=\{\bot,g_0,g_1,y_+,y_-\}$, $\calA(s_1)=\{S,R,P,O\}$, with feature map sending each action to a distinct one-hot vector $\phi(s_1,\cdot)\in\{e_1,e_2,e_3,e_4\}\subset\R^4$ (so $d=4$). The stage-$1$ parameter takes the form $\theta_1^{(\theta)}=(1/2,\theta,1/2-c,0)$ with $c\in(0,1/2)$ and $\theta\in\{0,1\}$. Transitions are deterministic: $S,R\to\bot$, $P\to g_\theta$ (which itself reveals $\theta$), and under $O$ the next state is $y_-$ with probability $1$ if $\theta=0$ and $y_+$ with probability $q\in(0,1)$ (else $y_-$) if $\theta=1$. Stage-$2$ rewards distinguish the two cells.

This finite construction is used only for the structural separation result; the information that $R$ and $P$ each carry about $\theta$ is delivered through the reward (for $R$) and the deterministic next state (for $P$), and is therefore separate from the linear-Gaussian observation model used in the log-determinant regret bound of Section~\ref{sec:bound}.

Conditional on having observed $N$ no-$y_+$ trajectories under the behaviour policy that plays only $O$, the posterior over $\theta$ satisfies
\begin{equation*}
p_N=\frac{(1-q)^N}{1+(1-q)^N}\to 0.
\end{equation*}
At episode~$1$ of online IDS,
\begin{equation*}
\Delta(S)=p_N/2,\qquad \Delta(R)=(1-p_N)/2,\qquad \Delta(P)=p_N/2+c,\qquad \Delta(O)=1/2+p_N/2,
\end{equation*}
while $g(S)=0$ and $g(R)=g(P)=\Hent(p_N)$, since the reward of $R$ equals $\theta$ (revealing $\theta$) and the next state of $P$ is $g_\theta$ (also revealing $\theta$). IDS selects $P$ if and only if $p_N/2+c<(1-p_N)/2$, equivalently $p_N<1/2-c$, so $N\ge N_\star(q,c):=\min\{N:p_N<1/2-c\}$ is sufficient.

Identifying
\begin{equation*}
\pi_0\equiv S,\qquad \pi_1\equiv R,\qquad \pi_P\equiv P,
\end{equation*}
the constants of Assumption~\ref{assum:B2} are read off as
\begin{equation*}
\underline\Delta_+=\underline\Delta_-=\tfrac12,\qquad c_0=c,\qquad c_1=\tfrac12+c,
\end{equation*}
since $\E[V_1^*-V_1^{\pi_0}\mid\theta=1]=1-1/2=1/2$, $\E[V_1^*-V_1^{\pi_1}\mid\theta=0]=1/2-0=1/2$, $\E[V_1^*-V_1^{\pi_P}\mid\theta=0]=1/2-(1/2-c)=c$, and $\E[V_1^*-V_1^{\pi_P}\mid\theta=1]=1-(1/2-c)=1/2+c$. Consistently, $\Delta(\pi_0)=p_N\underline\Delta_+=p_N/2$, $\Delta(\pi_1)=(1-p_N)\underline\Delta_-=(1-p_N)/2$, and $\Delta(\pi_P)=(1-p_N)c_0+p_Nc_1=c+p_N/2$. The warm-start hypotheses of Corollary~\ref{cor:warmstart} reduce to $c<\underline\Delta_-=1/2$ and $c<(1-e^{-1})(\underline\Delta_++\underline\Delta_-)=1-e^{-1}\approx 0.632$, both implied by $c\in(0,1/2)$. Therefore, for any $c\in(0,1/2)$ and all $N\ge N_\star(q,c)$, the conclusions of Theorem~\ref{thm:separation} apply.

\section{Additional Bandit Results}
\label{app:bandit-extra}

\paragraph{Full $\eta$ sweep for the hidden-mode bandit.}
Table~\ref{tab:hidden-mode-eta-full} reports the complete $\eta$ sweep behind
Table~\ref{tab:bandit-main}. The main text focuses on $\IDSzero$ because
the structural separation theorem is stated for vanilla IDS and relies on the
boundary convention $g=0<\Delta\Rightarrow\Psi=+\infty$. The sweep shows why
this convention matters: when the residual mode probability is extremely small
($N=1000$), regularised $\IDSeta$ with $\eta>0$ can make the uninformative
default action appear cheap, while $\IDSzero$ still probes.

\begin{table}[h]
\centering\small
\caption{Full $\eta$ sweep on the hidden-mode bandit. Cumulative regret over
$T=500$ online steps, averaged over $10$ seeds. A value near $0.15$ means the
agent probed once and then acted optimally.}
\label{tab:hidden-mode-eta-full}
\begin{tabular}{rcccccccc}
\toprule
$N$ & $p_N$ & greedy & UCB & IDS$_{0}$ & IDS$_{0.01}$ & IDS$_{0.05}$ & IDS$_{0.1}$ & TS \\
\midrule
100  & 0.377  & 0.15  & 0.50  & 0.15 & 0.15 & 0.15 & 0.15  & 1.06 \\
200  & 0.268  & 0.15  & 0.15  & 0.15 & 0.15 & 0.15 & 0.15  & 1.28 \\
300  & 0.182  & 0.15  & 0.15  & 0.15 & 0.15 & 0.15 & 0.15  & 1.29 \\
1000 & 0.0066 & 3.31  & 3.31  & 0.15 & 3.31 & 3.31 & 3.31  & 1.79 \\
\bottomrule
\end{tabular}
\end{table}

\paragraph{Full $\eta$ sweep for the biased contextual bandit.}
Table~\ref{tab:linear-bandit-eta-full} reports the complete $\eta$ sweep behind
Table~\ref{tab:bandit-main}. The main text reports $\eta=0.5$ as a
representative regularised IDS setting because it is the strongest choice in
the most biased warm-start regimes ($N=20,50$). The sweep also shows that the
best regularisation level is regime-dependent: when the warm start is more
accurate ($N=100$), a smaller regulariser performs better, while large
regularisation over-emphasises immediate regret.

\begin{table}[h]
\centering\small
\caption{Full $\eta$ sweep for the biased linear contextual bandit. Final
cumulative regret over $T=200$, mean over $20$ seeds. Bold marks the best IDS
variant per row.}
\label{tab:linear-bandit-eta-full}
\setlength{\tabcolsep}{4pt}
\begin{tabular}{rcccccccc}
\toprule
$N$ & greedy & UCB & TS & IDS$_{0}$ & IDS$_{0.01}$ & IDS$_{0.05}$ & IDS$_{0.1}$ & IDS$_{0.5}$ \\
\midrule
20  & $42.85{\scriptstyle\pm 34.83}$ & $4.80{\scriptstyle\pm 1.37}$ & $10.64{\scriptstyle\pm 1.52}$ & $7.29{\scriptstyle\pm 1.79}$ & $6.17{\scriptstyle\pm 1.42}$ & $5.19{\scriptstyle\pm 1.23}$ & $4.69{\scriptstyle\pm 1.43}$ & $\mathbf{3.57{\scriptstyle\pm 1.67}}$ \\
50  & $18.90{\scriptstyle\pm 26.91}$ & $1.94{\scriptstyle\pm 1.02}$ & $4.73{\scriptstyle\pm 0.76}$  & $2.51{\scriptstyle\pm 0.65}$ & $2.16{\scriptstyle\pm 0.88}$ & $1.90{\scriptstyle\pm 0.91}$ & $2.18{\scriptstyle\pm 1.50}$ & $\mathbf{1.67{\scriptstyle\pm 1.89}}$ \\
100 & $0.88{\scriptstyle\pm 2.08}$   & $0.113{\scriptstyle\pm 0.33}$ & $0.445{\scriptstyle\pm 0.52}$ & $0.152{\scriptstyle\pm 0.27}$ & $\mathbf{0.126{\scriptstyle\pm 0.32}}$ & $0.194{\scriptstyle\pm 0.59}$ & $0.632{\scriptstyle\pm 1.94}$ & $0.641{\scriptstyle\pm 1.95}$ \\

\bottomrule
\end{tabular}
\end{table}

\section{Algorithm, Implementation, and Hyperparameters}
\label{app:implementation}

\begin{algorithm}[h]
\caption{Off-to-on IDS, deep-ensemble path (used for D4RL). The linear-Gaussian path is identical except that $\Delta$ and $g$ use the BLR closed forms.}
\label{alg:practical}

\textbf{Inputs.} Offline dataset $\Doff$; ensemble size $K$; noise variance $\sigma^{2}$; gain temperature $\alpha$; regulariser $\eta$; online steps $T$; candidates per step $M$; perturbation $\sigma_a$; action bounds $[-a_{\max},a_{\max}]$; clip $q_{\max}$; UTD ratio $U$; mix ratio $\rho$.

\smallskip
\textit{Stage 1 (offline pretraining).}
\begin{enumerate}[leftmargin=*,itemsep=2pt]
\item For $i=1,\dots,K'$, train $(\pi^{(i)}_\psi, Q^{(i)})$ on $\Doff$ with TD3+BC~\citep{fujimoto2021minimalist}, with each transition admitted to member $i$'s minibatch with probability $p_{\mathrm{boot}}=0.9$.
\item Repack the twin heads of each critic as $K = 2K'$ independent ensemble members $\{Q_k\}$. Choose $\pi_\psi := \pi^{(1)}_\psi$ as the anchor actor.
\item Calibrate $\sigma^2 \leftarrow \mathrm{Var}(r + \gamma\bar Q(s', \bar\pi(s')) - \bar Q(s,a))$ and $\alpha$ on an offline holdout, where $\bar Q$ is the ensemble mean and $\bar\pi$ is the target actor.
\end{enumerate}

\smallskip
\textit{Stage 2 (online IDS).} For $t=1,\dots,T$:
\begin{enumerate}[leftmargin=*,itemsep=2pt]
\item Observe $s_t$. Set anchor $\bar a \leftarrow \pi_\psi(s_t)$ and candidates $A_t = \{\bar a\} \cup \{\mathrm{clip}(\bar a + \sigma_a \xi_m, -a_{\max}, a_{\max})\}_{m=2}^{M}$. Draw a wider set $A_{V^\star}$ of size $M_{V^\star}$ from the same proposal.
\item For each $a\in A_t\cup A_{V^\star}$, query the ensemble at the inference path so that $\widehat Q_k(s_t,a) = \mathrm{clip}(Q_k(s_t,a),-q_{\max},q_{\max})$ (the clip is \emph{not} applied during training forward passes). Compute
\(\Delta(a) \leftarrow \tfrac{1}{K}\sum_k(\max_{a'\in A_{V^\star}}\widehat Q_k(s_t,a') - \widehat Q_k(s_t,a)),\;
g(a) \leftarrow \tfrac{1}{2}\log(1 + \alpha\mathrm{Var}_k[\widehat Q_k(s_t,a)]/\sigma^2).\)
\item Pick $a_t \leftarrow \argmin_{a\in A_t}\Delta(a)^2/(g(a)+\eta)$, add execution noise $\mathcal N(0,\sigma_{\mathrm{ex}}^2)$, step the environment.
\item Push $(s_t,a_t,r_t,s_{t+1},\mathrm{done}_t)$ to the online replay buffer.
\item Slow fine-tune (every step, $U$ gradient steps): sample a batch with mix ratio $\rho$ from the online replay and $1-\rho$ from $\Doff$; update each $Q_k$ with bootstrap masking against the TD target $r + \gamma\bar Q(s', \pi_\psi^{\mathrm{tgt}}(s'))$; update $\pi_\psi$ with $-\bar Q(s, \pi_\psi(s))/|\bar Q|_{\mathrm{mean}}$ (no BC term in the online phase); Polyak-update $\pi_\psi^{\mathrm{tgt}}$ with rate $\tau$.
\item Every $E$ env steps, evaluate $\pi_\psi^{\mathrm{tgt}}$ deterministically on $n_{\mathrm{eval}}$ rollouts and log the D4RL normalised score.
\end{enumerate}
\end{algorithm}

\paragraph{Path A details (linear contextual bandit).}
We use a fixed feature map $\phi(s,a)\in\R^d$ constructed as follows: state and action are concatenated with quadratic and interaction terms, projected by a random Gaussian matrix, passed through a $\tanh$ nonlinearity, and $\ell_2$-normalised. The resulting feature dimension is $d=128$.

Rewards follow a linear-Gaussian model
\[
r(s,a)=\phi(s,a)^\top w^\star+\varepsilon,\quad \varepsilon\sim\mathcal{N}(0,\sigma^2).
\]

The offline dataset is collected by a mismatched behaviour policy. We initialise a Bayesian linear regression posterior with prior precision $\lambda=1$ and noise variance $\sigma^2=1$, then update it using standard closed-form sufficient statistics:
\[
\Lambda = \lambda I + \sum \phi\phi^\top,\quad
\mu = \Lambda^{-1}\sum \phi r.
\]

During the online phase, the posterior is updated after each observation via rank-$1$ updates (Cholesky form). All methods share this posterior.

For IDS, we draw $S=64$ posterior samples to estimate $\Delta(a)$ by Monte Carlo. The information gain $g(a)$ is computed in closed form from the posterior covariance. The candidate set size is $M=64$.

\paragraph{Path B details (deep ensemble, D4RL).}
We use a TD3+BC-based ensemble of actor--critic models as a posterior surrogate. Offline training runs $K'=5$ independent actor--critic pairs with bootstrap masking probability $0.9$ for $5\times 10^5$ gradient steps. Each critic has twin Q-heads, which are repacked into a $K=10$ ensemble $\{Q_k\}$.

Online, we discard the BLR head and use the ensemble for action evaluation. After each environment step, both the actor and critics are updated using a replay buffer with a $50/50$ mix of offline and online data and an update-to-data ratio of $5$.

At each state, a candidate set $A_t$ (size $M=64$) is sampled around the actor output, and $V_k^\star(s)$ is computed using an enlarged set ($M_{V^\star}=256$). To stabilise disagreement estimates, we apply inference-time clipping
\[
\widehat Q_k(s,a)=\mathrm{clip}(Q_k(s,a),\pm q_{\max}),\quad q_{\max}=10^4.
\]

The IDS surrogate uses ensemble variance with scale parameters $(\sigma^2,\alpha)$ calibrated on a held-out subset of $\Doff$. Target networks use Polyak averaging with $\tau=0.005$.

\paragraph{Environment hyperparameters (D4RL).}
Three knobs are tuned per environment; all others are shared. The proposal sigma $\sigma_a$ controls the candidate spread, the IDS regulariser $\eta$ trades exploration against $\arg\min\Delta$, and the ensemble Q clip $q_{\max}$ caps OOD extrapolation. The choice $\eta=0.05$ was robust on every environment except \textsc{halfcheetah-medium-replay}, where multimodal data raises ensemble disagreement and the ratio $\Delta^2/(g+\eta)$ becomes unstable at small $\eta$; raising $\eta$ to $0.5$ recovered stable behaviour.

\begin{table}[h]
\centering\small
\caption{Per-environment hyperparameters for Path~B. All other settings are shared.}
\label{tab:perenv}
\begin{tabular}{lccc}
\toprule
Environment (\texttt{-v2}) & $\sigma_a$ & $\eta$ & $q_{\max}$ \\
\midrule
walker2d-medium / -replay     & 0.1 & 0.05 & $10^4$ \\
hopper-medium / -replay       & 0.1 & 0.05 & $10^4$ \\
halfcheetah-medium            & 0.1 & 0.05 & $10^3$ \\
halfcheetah-medium-replay     & 0.1 & 0.5  & $10^3$ \\
\bottomrule
\end{tabular}
\end{table}

Total env steps $T=10^{6}$; warmup $25{,}000$ steps; evaluation every $5{,}000$ env steps over $10$ deterministic rollouts using $\pi_\psi^{\mathrm{tgt}}$; action exploration noise $\mathcal{N}(0,0.1^2)$; slow fine-tune every step with $U=5$ gradient updates; learning rate $3\times 10^{-4}$ for both ensemble members and actor; online batch size $256$; offline-mix ratio $\rho=0.5$ on \textsc{halfcheetah} and $\rho=0.0$ on \textsc{walker2d}/\textsc{hopper}; online BC weight $0$; Polyak rate $\tau=0.005$; policy frequency $2$; gain temperature $\alpha$ for $g$ calibrated on a $5{,}000$-transition offline holdout against the empirical TD-residual variance; Q-clamp applied at inference only. The D4RL data is loaded through the original release~\citep{fu2020d4rl} at \texttt{-v2}. Each (env, seed) D4RL run uses one NVIDIA Ada6000 GPU; bandit experiments run on a single CPU under one hour per seed.

\end{document}